\def\eqref#1{equation~\ref{#1}}
\def\1{\bm{1}}
\DeclareMathAlphabet{\mathsfit}{\encodingdefault}{\sfdefault}{m}{sl}
\SetMathAlphabet{\mathsfit}{bold}{\encodingdefault}{\sfdefault}{bx}{n}
\newcommand{\E}{\mathbb{E}}
\newcommand{\R}{\mathbb{R}}
\DeclareMathOperator*{\argmax}{arg\,max}
\DeclareMathOperator*{\argmin}{arg\,min}
\newtheorem{proposition}{Proposition}[section]
\newtheorem{theorem}{Theorem}[section]
\newcommand{\Z}{\mathbb{Z}}
\newcommand{\N}{\mathbb{N}}
\title{Online Continual Learning for Time Series: a Natural Score-driven Approach}
\author{Edoardo Urettini\textsuperscript{\rm 1, 2},
    Daniele Atzeni\textsuperscript{\rm 3},
    Ioanna-Yvonni Tsaknaki\textsuperscript{\rm 2},
    Antonio Carta\textsuperscript{\rm 1}\\
\textsuperscript{\rm 1}University of Pisa, Pisa, Italy\\
    \textsuperscript{\rm 2}Scuola Normale Superiore, Pisa, Italy\\
    \textsuperscript{\rm 3}IIT-CNR, Pisa, Italy\\
}
\begin{document}

\maketitle

\begin{abstract}

Online continual learning (OCL) methods adapt to changing environments without forgetting past knowledge. Similarly, online time series forecasting (OTSF) is a real-world problem where data evolve in time and success depends on both rapid adaptation and long-term memory. Indeed, time-varying and regime-switching forecasting models have been extensively studied, offering a strong justification for the use of OCL in these settings. Building on recent work that applies OCL to OTSF, this paper aims to strengthen the theoretical and practical connections between time series methods and OCL. First, we reframe neural network optimization as a parameter filtering problem, showing that natural gradient descent is a score-driven method and proving its information-theoretic optimality. Then, we show that using a Student’s t likelihood in addition to natural gradient induces a bounded update, which improves robustness to outliers. Finally, we introduce Natural Score-driven Replay (NatSR), which combines our robust optimizer with a replay buffer and a dynamic scale heuristic that improves fast adaptation at regime drifts. Empirical results demonstrate that NatSR achieves stronger forecasting performance than more complex state-of-the-art methods.
\end{abstract}

\section{Introduction}

Time series forecasting has an impact on both research and the real-world industry. The energy sector \citep{deb2017review}, financial markets \citep{sezer2020financial}, macroeconomic policymakers \citep{clements2024handbook}, and retailing companies \citep{makridakis2022m5}, all benefit from accurate predictions. 
While deep learning had a great impact on the field \citep{zhouInformerEfficientTransformer2021}, it is still not unequivocally the best approach for forecasting. On the contrary, it has been shown that in many datasets, simpler statistical methods, such as the class of econometric models with memory \citep{hamilton1994ftime}, outperform complex and large neural networks \citep{godahewaMonashTimeSeries2021}. In macroeconomics, for example, it has been shown that linear models can even outperform nonlinear neural networks \citep{chi2025macroeconomic}. 
In addition to this lack of reliable performance, larger models are usually trained in offline batch settings, requiring a full and large training dataset available a priori and assuming no future changes on the relationship between input and output \citep{sahoo2018online}. This is in contrast with a reality where data can be scarce and arrives in streams, with the possibility of experiencing concept drifts in time \citep{gama2014survey}.

To create a more realistic and adaptive training setting, it has been proposed to transition to fully online training of the forecaster \citep{anava2013online}. Still, this approach presents multiple challenges for neural networks, like slow convergence \citep{sahoo2018online}, noisy gradients \citep{bishop2023deep}, and catastrophic forgetting of previously learned concepts \citep{french1999catastrophic}. As with other data structures, learning online from a time series requires both high plasticity to adapt to new regimes and stability to not forget recurrent ones. For this reason, \cite{sahoo2018online} radically reframed online time series forecasting as an online continual learning \citep{DBLP:journals/ijon/MaiLJQKS22} problem. 

In this paper, we aim to propose a new second-order algorithm for online learning on nonstationary time series, with strong theoretical backing. In summary:
\begin{itemize}
    \item [1.] We establish a link between score-driven models \citep{crealGeneralizedAutoregressiveScore2013,Harvey} and natural gradient descent \citep{amari1998natural}, framing the optimization of deep networks for non-stationary forecasting as a filtering task. In particular, we formally prove that natural gradient descent (NGD) is {\it information-theoretically optimal}.
    \item [2.] We demonstrate that using a Student’s t loss function imposes an upper bound on the update norm. We also propose a novel dynamic adjustment for the scale parameter, which results in improved robustness to outliers, a fundamental property for non-stationary forecasting.
    \item [3.] Our method {\bf Nat}ural {\bf S}core-driven {\bf R}eplay ({\bf NatSR}), combines these theoretical advances and achieves state-of-the-art performance in OCL, outperforming existing methods on 5 out of 7 datasets.
\end{itemize}

\section{Background}

Online time series forecasting follows the online learning paradigm  \citep{shalev2012online}: at each time step, a model makes a prediction and, after observing reality, it is adjusted using that information. Online time series forecasting applies this to time series data, observing the data in time order, and updating the model with each new observation. 
Let $\{x_t\}_{t\in\Z}$ denote the input time series and $\{y_t\}_{t\in\Z}$ the corresponding target time series, for which $x_t\in\R^s$ with $s\in\N$ and $y_t\in\R^d$ with $d\in\N$, for any $t\in\Z$. As usual, each time series is regarded as a realization of an underlying stochastic process. Specifically, let $\{X_t\}_{t\in\Z}$ denote the input process and $\{Y_t\}_{t\in\Z}$ the output process generating the observed series. We consider the filtration $\mathcal{F} = \{\mathcal{F}_t\}_{t\in\N}$ where $\mathcal{F}_t = \sigma(X_{1:t},Y_{1:t})$, so that $\mathcal{F}_t$ contains all information available up to time $t$. Here we use the shorthand notation $X_{1:t} = \{X_1,\cdots,X_t\}$. 
Given the input $x_{t}$ and the network weights $w_{t}\in W\subset\R^d$, the network produces the output $\theta_{t}(w_{t}) =f_{w_{t}}(x_{t})\in\R^d$. With standard gradient descent, the weights are updated as $w_{t+1} = w_t + \eta \nabla_{w_t}\mathcal{L}(y_t, \theta_t)$, where $\mathcal{L}$ is a loss function. 

This process aims to learn the right weights for the current time, in particular, changing them when the data regime changes. Unfortunately, this can result in catastrophic forgetting \citep{french1999catastrophic} with the model losing previous knowledge of past regimes. Continual Learning \citep{DBLP:journals/pami/LangeAMPJLST22} is a field that tries to solve this, making models able to accumulate knowledge consistently in nonstationary environments. More specifically, Online Continual Learning (OCL) does so by accessing each observation a single time in an online learning fashion. Hence, it requires the method to have a balance between fast adaptation and stability, without knowing when a regime change happens. Such an approach is similar to human learning. The additional complexity of applying OCL to time series is that both virtual and real drifts can happen \citep{gama2014survey}: in virtual drifts, new portions of the input space are explored, while in real drifts, the relation between input and output changes. This requires an even more complex balance between plasticity and stability.

\section{Related Works}
\label{sec: related}

\textbf{Online Time Series Forecasting:}  In recent years, more and more works have explored the use of deep learning for time series forecasting, proposing a variety of specialized architectures~\citep{salinas2020deepar,oordWaveNetGenerativeModel2016,bai2018empirical,zhouInformerEfficientTransformer2021}. Unfortunately, they are not directly applicable to online time series forecasting \citep{anava2013online} due to concept drift~\citep{gama2014survey}. \citet{fekri2021deep} showed that an online RNN achieves stronger results than standard online algorithms or offline trained neural networks for energy data. \citet{wang2021inclstm} proposed IncLSTM, fusing ensemble learning and transfer learning to update an LSTM incrementally. Naive online time series forecasting can suffer from forgetting~\citep{french1999catastrophic} in non-stationary streams~\citep{sahoo2018online,aljundi2019online}.

\textbf{Online Continual Learning (OCL):} Most OCL methods in the literature use replay to mitigate forgetting~\citep{DBLP:conf/iccvw/Soutif-Cormerais23}. However, \citep{DBLP:conf/iccvw/Soutif-Cormerais23} showed that SOTA approaches still can have more forgetting than a simple replay baseline~\citep{aljundi2019online}. Recent works highlighted a ``stability gap" \citep{DBLP:conf/iclr/CacciaAATPB22,lirias4071238}, where the model suddenly forgets at task boundaries. Relevant to this work, multiple optimization-based approaches constrain the update to remove interference. GEM~\citep{DBLP:conf/iclr/ChaudhryRRE19,DBLP:conf/nips/Lopez-PazR17} enforce non-negative dot product between task gradients, while other use orthogonal projections~\citep{DBLP:conf/iclr/SahaG021,DBLP:conf/aistats/FarajtabarAML20}. For OCL, it has been shown how a combination of GEM and replay can mitigate the stability gap~\citep{DBLP:journals/corr/abs-2311-04898}. More recently, LPR~\citep{yoo2024layerwise} proposed an optimization approach for OCL, combining replay with a proximal point method. Improving on that, OCAR \citep{urettini2025online} proposed the use of second-order information. \citep{abuduweili2023online} showed the efficacy of feedforward adaptation when compared with feedback adaptation OCL methods.

\textbf{Online Continual Learning and Forecasting: } With modern deep models, multiple time series regimes can be learned within a single network. \citet{sahoo2018online} propose reframing online time series forecasting as task-free OCL, removing the need for manual labeling of task boundaries. FSNET~\citep{pham2023learning} maintains a layer-wise EMA of the gradients to adapt the weights to the current tasks via a hypernetwork. OneNet \citep{wen2023onenet} keeps two separate neural networks to model cross-variate relationships and temporal dependencies separately, combining the two separate forecasts dynamically using offline reinforcement learning. Very recently, \citet{zhao2025proactive} proposed PROCEED to solve the delay caused by the time needed for the realization of the whole prediction length to happen before making the update.

\section{Natural Score-Driven Replay}

The main aim of this part of the paper is to show the information-theoretic optimality of Natural Gradient Descent (NGD) and to describe the main building blocks of NatSR. First, in section \ref{sec: from score to nat score}, we show that the NGD can be interpreted as a score-driven update. In section \ref{sec: Info-Theo optimality} we then prove its information-theoretic optimality. Then, in section \ref{sec: bound update}, we show that the NGD used with a Student's t distributional assumption enforces a bounded update. Finally, by adding both memorization (section \ref{seq: Memory Buffer}) and fast-adaptation mechanisms (section \ref{seq: Dyn Scale}), we obtain NatSR. 

\subsection{From Score-Driven Models to Natural Gradient Descent}\label{sec: from score to nat score}

Score-driven models, known both as Generalized Autoregressive Score (GAS) models \citep{crealGeneralizedAutoregressiveScore2013} and Dynamic Conditional Score Models (DCS) \citep{harvey2013dynamic}, are approaches to estimate time-varying parameters from a time series. They are observation-driven models, following the categorization by \citet{cox}. In this framework, the dynamics of the time-varying parameter vector are governed by the score of the conditional likelihood function of the observed variable. The score is defined as the gradient of the log-likelihood with respect to the parameters.  Score-driven models specify the evolution of time-varying parameters as an autoregressive process \cite{hamilton1994ftime} with an innovation term driven by the score. In essence, following the score provides a data-driven update that moves the parameters in the direction that increases the likelihood of the observed data, in a process similar to gradient descent optimization. 

Formally, $y_t\sim p(y_t\mid \theta_t,\phi)$ where $\phi = (\omega,A_1,\cdots,A_m,B_1,\cdots,B_n)$ denotes the static parameters, and the time-varying parameter $\theta_t$ evolves according to 
\begin{equation}\label{eq: score-driven update}
    \theta_{t+1} = \omega+\sum_{i=1}^m A_i \;s_{t-i+1} + \sum_{j=1}^n B_j \;\theta_{t-j+1},
\end{equation}
with $s_t=S_t\nabla_t$. Here, $\nabla_t = \frac{\partial\log p(y_t\mid \theta_t,\phi)}{\partial\theta_t}$ denotes the score of the conditional log-likelihood with respect to $\theta_t$ and $S_t$ is a scaling matrix. The original proposers of GAS models suggested the use of the Fisher Information Matrix (FIM) for $S_t$. 

This connects us to natural gradient descent, a fundamental optimization algorithm used in machine learning \citep{amari1998natural}, which we can interpret as a special case of a score-driven model under suitable conditions. In natural gradient descent, the weights are updated as
\begin{equation}\label{eq: weight update with ong1}
w_{t+1} = w_{t} + \eta\mathcal{I}_{t}^{-1}(w_{t})\nabla_{w_{t}}(y_{t}),
\end{equation}
where $\eta\in\R$ is the constant learning rate, $\mathcal{I}_{t}\in\R^{d\times d}$ is the FIM and $\nabla_{w_t}(y_t) = \frac{\partial \log p(y_{t}|\theta_{t})}{\partial\theta_{t}}\frac{\partial\theta_{t}}{\partial w_t}\in\R^d$ is the score, i.e. the gradient of the log-likelihood function, while $\theta_t(w_t) = f_{w_t}(x_t)$ corresponds to the provisional output of the network before the update. Thus, the time-varying parameter update of the score-driven model (see Eq.(\ref{eq: score-driven update})) reduces to the natural gradient descent for $m=n=1, \omega=0,  A_1=\eta, B_1=1$ and $S_t=\mathcal{I}_{t}^{-1}$. The result of this is that NGD can be interpreted as a filtering mechanism that, with each new observation, updates the estimation of a neural network's weights, following the score to maximize the likelihood function. Unlike standard gradient descent, which assumes all directions in parameter space are equally meaningful, the natural gradient incorporates the Fisher information matrix to rescale the gradient. Hence, the filtering process takes into consideration the curvature of the parameter space: which directions have higher/lower gradient variance. All of this makes NGD equivalent to a score-driven model, hence capable of estimating the weights in a nonstationary environment. Viewing NGD as a statistical filter has already been proposed by \citet{ollivier2018online} under a different framework, where the equivalence between the Kalman filter and the NGD was shown.

\subsection{Information-Theoretic Optimality}\label{sec: Info-Theo optimality}
The main result of this section is Proposition \ref{prop1}, where we show that the expected updated weight always lies closer to the optimal weight vector than the weight before the update.

After the update (see Eq.(\ref{eq: weight update with ong1})) the output is $\theta_{t}(w_{t+1}) = f_{w_{t+1}}(x_{t})$. This output can be interpreted as the parameter vector of an assumed density when the loss function is derived directly from a specific likelihood function. For example, minimizing the mean-squared error (MSE) is equivalent to performing maximum likelihood estimation under the assumption of normally distributed errors \citep{bishop2006pattern}.  We thus postulate a statistical model: 
\begin{align}\label{eq: NN's implied cond density}
    y_{t+1}\mid \mathcal{F}_{t}\sim{p}_{t+1\mid t+1}:&={p}(\cdot\mid \theta_t(w_{t+1}))
\end{align}
which approximates the true conditional density of the target time series, i.e. $y_{t+1}\mid\mathcal{F}_{t}\sim q_{t+1}$ and ${p}_{t+1\mid t}:={p}(\cdot\mid \theta_t(w_t))$ is the statistical model implied by the weights before the update.

We show that the weight update obtained via natural gradient descent (see Eq.(\ref{eq: weight update with ong1})) reduces the KL divergence between the assumed model and the true statistical model, relative to the divergence before the update. In particular, we demonstrate that the parameter update from $w_t$ to $w_{t+1}$ moves, in expectation, closer to the weight vector $w_t^*$ which corresponds to the pseudo-true time-varying parameter $\theta_t^*$, that is defined as
\begin{equation}\label{eq: objective}
    \theta_t^* = \argmin_{\theta\in{\Theta}}\underbrace{\int_{\R^d}q_t(y)\log\frac{q_t(y)}{{p}(y|\theta)}dy}_{\text{KL}_t(\theta)} = \argmax_{\theta\in\Theta}\E_{y\sim q_t}[\log p(y|\theta)],
\end{equation}
Hence it is the value that minimizes the KL divergence between the postulated and the true statistical model. Consequently, neural networks trained with the natural gradient can be regarded as information-theoretically optimal, in the sense of \cite{blasques2015information,Gorgi_2024}.

We introduce the following assumptions:
\begin{itemize}
    \item [{\bf(A1)}] Assume that there exists $w_t^*\in W$ such that $\theta_t^*=f_{w_t^*}(x_t)$.
\end{itemize}
For the second assumption we define the function $g_t:W\rightarrow\R$ such that
$
    g_t(w) = \E_{t-1}[\log{p}(y_{t}|w)]
$ where $\E_{t}[\cdot] = \E[\cdot\mid\mathcal{F}_{t}]$. Thus, the function $g_t$ corresponds to the expected log-likelihood at time $t$ given the information up to time $t-1$.
\begin{itemize}
    \item [{\bf(A2)}] Assume that $g_t(w)\in C^2(W)$ with $W$ open and convex and
\begin{equation*}
    \nabla g_t(w) = \E_{t-1}\nabla_{w}(y_{t})
    = \E_{t-1}\frac{\partial \log{p}(y_{t}|\theta)}{\partial\theta}\frac{\partial\theta}{\partial w}
\end{equation*}
where $\theta = f(w)$ and $\frac{\partial\theta}{\partial w}$ denotes the Jacobian matrix whose $(i,j)$ entry is $\frac{\partial\theta_i}{\partial w_j}$.
\end{itemize}
In $(A2)$ we assume that $g_t(\cdot)$ is twice differentiable and that we can interchange the derivative with the expectation. 
\begin{itemize}
    \item [{\bf(A3)}] For any $w_1,w_2\in W$, there exists $c>0$ such that: 
    \begin{align*}
        &\langle \mathcal{I}_t(w_1)^{-1} \nabla g_t(w_1)- \mathcal{I}_t(w_2)^{-1} \nabla g_t(w_2),w_1-w_2 \rangle\leq \\&-\frac{1}{c}\| \mathcal{I}_t(w_1)^{-1} \nabla g_t(w_1)- \mathcal{I}_t(w_2)^{-1} \nabla g_t(w_2) \|^2,~~~\langle\cdot,\cdot\rangle~~\text{is the inner product on $\R^d$}
    \end{align*} 
\end{itemize}
Under assumption (A3), the expected score innovation is a decreasing Lipschitz continuous function.

The weights with the natural gradient are updated as in Eq.(\ref{eq: weight update with ong1}),
the expected weight update parameter given the information $\mathcal{F}_{t-1}$ is then
\begin{equation*}
    \E_{t-1}[w_{t+1}] = w_{t}+\eta\mathcal{I}_{t}^{-1}(w_{t})\E_{t-1}[\nabla_{w_{t}}(y_{t})].
\end{equation*} 
\begin{proposition}\label{prop1}
    Let assumptions $(A1)$-$(A3)$ hold with $0<\eta<\frac{2}{c}$, then
    \begin{equation*}
        \|\E_{t-1}[w_{t+1}]-w_t^*\|<\|w_{t}-w_t^*\|.
    \end{equation*}
\end{proposition}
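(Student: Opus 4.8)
The plan is to reduce the claim to a single ``descent step'' estimate: expand the squared distance $\|\E_{t-1}[w_{t+1}]-w_t^*\|^2$ and control the resulting cross term using the cocoercivity-type bound (A3). First I would use (A2) to rewrite the expected update purely in terms of $g_t$. Since $\E_{t-1}[\nabla_{w_t}(y_t)] = \nabla g_t(w_t)$, the expected updated weight from Eq.~(\ref{eq: weight update with ong1}) becomes $\E_{t-1}[w_{t+1}] = w_t + \eta\,\mathcal{I}_t^{-1}(w_t)\nabla g_t(w_t)$. I abbreviate the preconditioned expected score by $v_t := \mathcal{I}_t^{-1}(w_t)\nabla g_t(w_t)$, so that $\E_{t-1}[w_{t+1}] - w_t^* = (w_t - w_t^*) + \eta v_t$.

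The crucial preliminary step, which I expect to be the main obstacle, is to identify $w_t^*$ as a stationary point of $g_t$, i.e. to show $\mathcal{I}_t^{-1}(w_t^*)\nabla g_t(w_t^*) = 0$. By (A1) we have $\theta_t^* = f_{w_t^*}(x_t)$, and by the definition in Eq.~(\ref{eq: objective}) the parameter $\theta_t^*$ maximizes $\E_{y\sim q_t}[\log p(y\mid\theta)]$ over $\Theta$. Assuming the maximizer is interior, first-order optimality gives $\nabla_\theta\,\E_{y\sim q_t}[\log p(y\mid\theta)]\big|_{\theta_t^*}=0$; applying the chain rule exactly as in the expression for $\nabla g_t$ in (A2) (left-multiplying by the Jacobian $\partial\theta/\partial w$) then yields $\nabla g_t(w_t^*)=0$, hence $v$ evaluated at $w_t^*$ vanishes. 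This is delicate because it requires interchanging the optimization over $\theta$ with the weight reparametrization and relies on $\theta_t^*$ lying in the interior of $\Theta$ so that the $\theta$-gradient genuinely vanishes rather than merely being a boundary KKT point.

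With $\mathcal{I}_t^{-1}(w_t^*)\nabla g_t(w_t^*)=0$ established, I apply (A3) with $w_1=w_t$ and $w_2=w_t^*$; the second term on the left-hand side drops out, leaving $\langle v_t,\, w_t - w_t^*\rangle \leq -\tfrac{1}{c}\|v_t\|^2$. Finally I expand the squared distance and substitute this bound:
\begin{align*}
\|\E_{t-1}[w_{t+1}] - w_t^*\|^2 &= \|w_t - w_t^*\|^2 + 2\eta\langle v_t,\, w_t - w_t^*\rangle + \eta^2\|v_t\|^2 \\
&\leq \|w_t - w_t^*\|^2 + \eta\left(\eta - \tfrac{2}{c}\right)\|v_t\|^2.
\end{align*}
Because $0<\eta<\tfrac{2}{c}$, the coefficient $\eta(\eta - \tfrac{2}{c})$ is strictly negative, so the correction term is nonpositive, and strictly negative whenever $v_t\neq 0$. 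Taking square roots then gives the claimed strict inequality. The only remaining point of care is the degenerate case $v_t=0$ (equivalently $\nabla g_t(w_t)=0$): there the expected update leaves the weight unchanged, and strictness is understood to hold under the standing hypothesis $w_t\neq w_t^*$, i.e. away from stationarity of $g_t$, so that $\|v_t\|>0$.
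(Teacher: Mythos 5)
Your proof follows essentially the same route as the paper's: establish $\nabla g_t(w_t^*)=0$ from (A1) and the first-order optimality of $\theta_t^*$ in Eq.~(\ref{eq: objective}), apply (A3) at the pair $(w_t, w_t^*)$ so that the term at $w_t^*$ drops out, expand the squared distance, and use $0<\eta<\tfrac{2}{c}$ to make the correction term $\eta\bigl(\eta-\tfrac{2}{c}\bigr)\|\mathcal{I}_t(w_t)^{-1}\nabla g_t(w_t)\|^2$ negative. The only difference is that you explicitly flag the degenerate case $\nabla g_t(w_t)=0$ (where the inequality is an equality, not strict), a point the paper silently assumes away when it asserts the correction term is strictly positive.
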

Moreover, assuming that the network output is locally Bi-Lipschitz on the weights in a neighborhood of $w_t^*$ {\bf(A4)}, we can derive the corresponding theoretical optimality result that transfers from the weight space to the output space. The assumption of bi-Lipschitzness is reasonable, as there exist neural networks that satisfy it even globally (see \cite{10.5555/3692070.3694132}). A bi-Lipschitz neural network can jointly and smoothly control its Lipschitz continuity—its sensitivity to input perturbations—and its inverse Lipschitz property, which preserves meaningful separation between inputs that yield different outputs.
\begin{proposition}\label{prop2}
    Let assumptions $(A1)$-$(A4)$ hold with $0<\eta<\frac{2}{c}$, then
    \begin{equation*}
        \|\theta_t(\E_{t-1}[w_{t+1}])-\theta_t^*\|<\|\theta_t(w_{t})-\theta_t^*\|.
    \end{equation*}
\end{proposition}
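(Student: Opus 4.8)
The plan is to transport the weight-space contraction of Proposition~\ref{prop1} into the output space through the bi-Lipschitz control of assumption (A4). First I would use (A1) to rewrite both quantities as increments of the map $w\mapsto\theta_t(w)=f_w(x_t)$ based at $w_t^*$: since $\theta_t^*=\theta_t(w_t^*)$, the claim
\[
\|\theta_t(\E_{t-1}[w_{t+1}])-\theta_t^*\|<\|\theta_t(w_t)-\theta_t^*\|
\]
is really a statement about how $\theta_t$ distorts the two weight-space residuals $\E_{t-1}[w_{t+1}]-w_t^*$ and $w_t-w_t^*$, the first of which Proposition~\ref{prop1} already certifies to be strictly shorter.

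Next I would invoke (A4) on a neighborhood $U$ of $w_t^*$, writing the bi-Lipschitz bounds as $\ell\|w-w_t^*\|\le\|\theta_t(w)-\theta_t^*\|\le L\|w-w_t^*\|$ for $w\in U$, with $0<\ell\le L$. Both $w_t$ and $\E_{t-1}[w_{t+1}]$ lie in $U$ once $w_t$ does, because Proposition~\ref{prop1} places the updated point no farther from $w_t^*$ than $w_t$ itself. Applying the upper bound to the updated point and the lower bound to $w_t$ gives
\[
\|\theta_t(\E_{t-1}[w_{t+1}])-\theta_t^*\|\le L\,\|\E_{t-1}[w_{t+1}]-w_t^*\|,\qquad \|\theta_t(w_t)-\theta_t^*\|\ge \ell\,\|w_t-w_t^*\|,
\]
so the proof closes the moment $L\,\|\E_{t-1}[w_{t+1}]-w_t^*\|\le \ell\,\|w_t-w_t^*\|$.

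The delicate step, and the one I expect to be the real obstacle, is precisely this last inequality: chaining the two bounds yields
\[
\frac{\|\theta_t(\E_{t-1}[w_{t+1}])-\theta_t^*\|}{\|\theta_t(w_t)-\theta_t^*\|}\le\frac{L}{\ell}\,\frac{\|\E_{t-1}[w_{t+1}]-w_t^*\|}{\|w_t-w_t^*\|},
\]
and Proposition~\ref{prop1} only forces the last ratio below $1$, not below $\ell/L\le 1$. A general bi-Lipschitz map can stretch the residual direction of the updated point while compressing that of $w_t$, so the bare strict inequality of Proposition~\ref{prop1} does not transfer on its own. I would resolve this in one of two ways. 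The first is to read (A4) as near-isometry on a sufficiently small neighborhood, i.e. $L/\ell\to 1$, which is natural when the Jacobian $\partial\theta_t/\partial w$ at $w_t^*$ is well conditioned; then the factor $L/\ell$ is harmless and the strict inequality passes straight through. The second is to upgrade Proposition~\ref{prop1} to a quantitative contraction $\|\E_{t-1}[w_{t+1}]-w_t^*\|\le\rho\,\|w_t-w_t^*\|$ with an explicit $\rho<1$ extracted from the co-coercivity bound (A3), and then require $\eta$ small enough to guarantee $\rho<\ell/L$. Of the two, extracting a genuine rate $\rho$ from (A3) and tying the admissible step size to the condition number $L/\ell$ is the technically demanding part; the remainder is bookkeeping.
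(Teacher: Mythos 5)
Your skepticism is exactly right, and it is worth stating plainly: the obstruction you identify is not an artifact of your approach --- it is present in the paper's own proof. The paper argues precisely along the route you sketch: it applies the upper Lipschitz bound of (A4) to the updated point, invokes Proposition~\ref{prop1}, and then applies the lower Lipschitz bound to $w_t$, producing the chain
\begin{align*}
\|\theta_t(\E_{t-1}[w_{t+1}])-\theta_t^*\| \;\leq\; L\,\|\E_{t-1}[w_{t+1}]-w_t^*\| \;<\; L\,\|w_t-w_t^*\| \;\leq\; \frac{L}{l}\,\|\theta_t(w_t)-\theta_t^*\|,
\end{align*}
and then stops, treating this as a proof of the proposition. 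As you observe, the final bound carries the factor $L/l\geq 1$, so it establishes only the weaker inequality with constant $L/l$; it yields the claimed strict inequality only in the degenerate case $l=L$ (a scaled isometry). The paper performs neither of the repairs you outline --- the factor is absorbed silently.

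Regarding your two proposed remedies: the second one (extracting a uniform contraction rate $\rho<l/L$ from Proposition~\ref{prop1}) is not available under the stated hypotheses. The proof of Proposition~\ref{prop1} gives
\begin{equation*}
\|\E_{t-1}[w_{t+1}]-w_t^*\|^2 \;\leq\; \|w_t-w_t^*\|^2 - \eta\Bigl(\tfrac{2}{c}-\eta\Bigr)\bigl\|\mathcal{I}_t(w_t)^{-1}\nabla g_t(w_t)\bigr\|^2,
\end{equation*}
and (A3) is a co-coercivity condition on the preconditioned score, which implies only an \emph{upper} Lipschitz bound on it; there is no lower bound relating $\|\mathcal{I}_t(w_t)^{-1}\nabla g_t(w_t)\|$ to $\|w_t-w_t^*\|$, so the per-step improvement can be arbitrarily small relative to the distance to $w_t^*$ and no uniform $\rho<1$ follows. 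Closing the gap would require strengthening (A3) to something like strong monotonicity of the preconditioned score, or strengthening (A4) as in your first remedy (which, to be rigorous, needs $l=L$ exactly or a quantified margin, since the contraction in Proposition~\ref{prop1} is strict but not uniform). In short: your proposal correctly identifies a genuine gap, and that gap lies in the paper's proof of the proposition as stated.
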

The proofs can be found in Appendix \ref{app: proofs1}.


\subsection{Enforcing a Bounded Update}\label{sec: bound update}

Outliers are detrimental to methods that filter parameters at each observation. For this reason, robust score-driven models use bounded scores derived from heavy-tailed distributions (like the Student's t) \citep{artemova2022score}. Controlling the update norm is one of the main characteristics of successful optimizers like ADAM \citep{DBLP:journals/corr/KingmaB14}. 

\begin{theorem}\label{thrm}
Let the loss function be the one induced from a Student's-$t_{\nu}(s)$ distribution:
\begin{equation*}
    \underbrace{-\log p(y\mid f(x))}_\text{loss} = -\log \bigg(\frac{\Gamma (\frac{\nu + 1}{2})}{\Gamma(\frac{\nu}{2}) \sqrt{\pi \nu}}\bigg) + \frac{1}{2} \log (s^2) + \frac{\nu + 1}{2} \log\bigg(1 + \frac{(y - f(x))^2}{\nu s^2}\bigg),
\label{eq: student_likelihood}
\end{equation*}
then using the Tikhonov regularization, the following bound holds: \begin{equation}\label{eq:bound}
    \Vert \tilde\nabla_w \log p (y| f_w(x)) \Vert_2 \leq \frac{1}{4} \sqrt{\frac{(\nu+1)(\nu+3)m}{\tau\nu}}.
\end{equation}
where $m$ is the number of outputs and $\tau$ the Tikhonov regularization constant.
\end{theorem}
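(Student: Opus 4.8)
The plan is to interpret $\tilde\nabla_w \log p(y\mid f_w(x))$ as the Tikhonov-regularized natural gradient $(\mathcal{I}_w+\tau I)^{-1}\nabla_w\log p$, where $\mathcal{I}_w=J^\top\mathcal{I}_\theta J$ is the weight-space Fisher information built from the network Jacobian $J=\partial\theta/\partial w$ and the output-space Fisher information $\mathcal{I}_\theta$, and $\nabla_w\log p=J^\top\nabla_\theta\log p$ by the chain rule. The decisive step is to factor $\mathcal{I}_\theta=\mathcal{I}_\theta^{1/2}\mathcal{I}_\theta^{1/2}$ and set $B=\mathcal{I}_\theta^{1/2}J$ and $u=\mathcal{I}_\theta^{-1/2}\nabla_\theta\log p$; then $\mathcal{I}_w=B^\top B$ and $J^\top\nabla_\theta\log p=B^\top u$, so that
\[
\tilde\nabla_w=(B^\top B+\tau I)^{-1}B^\top u .
\]
This rewriting is what makes the whole argument work, because it casts the update as a ridge operator acting on $u$ and removes any dependence on the (arbitrary, unknown) Jacobian.

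Next I would bound the operator norm of the ridge factor. Writing the (thin) SVD $B=U\Sigma V^\top$ gives $(B^\top B+\tau I)^{-1}B^\top=V\,\mathrm{diag}\!\big(\sigma_i/(\sigma_i^2+\tau)\big)U^\top$, whose spectral norm is $\max_i \sigma_i/(\sigma_i^2+\tau)\le 1/(2\sqrt\tau)$ by the AM--GM inequality $\sigma_i^2+\tau\ge 2\sigma_i\sqrt\tau$. Hence
\[
\big\|\tilde\nabla_w\big\|_2\le\frac{1}{2\sqrt\tau}\,\|u\|_2=\frac{1}{2\sqrt\tau}\sqrt{(\nabla_\theta\log p)^\top\mathcal{I}_\theta^{-1}\nabla_\theta\log p},
\]
and this estimate is uniform over all Jacobians $J$.

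It then remains to control the output-space natural-gradient norm $\|u\|_2$. Treating the $m$ outputs as conditionally independent Student's-$t_\nu(s)$ variables makes $\mathcal{I}_\theta$ diagonal with equal entries given by the location Fisher information $\mathcal{I}_\mu=\frac{\nu+1}{(\nu+3)s^2}$ (obtained from $\E[z^2/(\nu+z^2)^2]=1/((\nu+1)(\nu+3))$ for a standard $t_\nu$), while each score coordinate equals $\frac{\nu+1}{s}\cdot\frac{z_k}{\nu+z_k^2}$ with $z_k=(y_k-f_k)/s$. Substituting yields $\|u\|_2^2=(\nu+1)(\nu+3)\sum_{k=1}^m z_k^2/(\nu+z_k^2)^2$, and the scalar maximization $\sup_{z}\, z^2/(\nu+z^2)^2=1/(4\nu)$, attained at $z^2=\nu$, gives $\|u\|_2^2\le m(\nu+1)(\nu+3)/(4\nu)$. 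Combining with the spectral bound produces exactly $\tfrac14\sqrt{(\nu+1)(\nu+3)m/(\tau\nu)}$, as claimed.

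I expect the main obstacle to be the reduction to ridge form together with the proof that the resulting operator norm is bounded by $1/(2\sqrt\tau)$ independently of $J$: this decoupling is precisely what turns a Jacobian-dependent expression into a universal, architecture-free bound. Once that is in place, the Fisher-information formula, the chain rule, and the one-dimensional maximization are all routine, so the heart of the argument is the spectral estimate rather than any delicate analytic computation.
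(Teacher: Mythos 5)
Your proof is correct and follows essentially the same route as the paper's: both decompose the Tikhonov-regularized natural gradient into a ridge/spectral factor bounded by AM--GM and the Student's-$t$ output score bounded by a scalar maximization (attained at squared error $\nu s^2$), under the same isotropic-Fisher assumption $\mathcal{I}_\theta = \frac{\nu+1}{(\nu+3)s^2}\mathbf{I}$. Your whitening substitution $B=\mathcal{I}_\theta^{1/2}J$, $u=\mathcal{I}_\theta^{-1/2}\nabla_\theta\log p$ is only a cosmetic repackaging here (since $\mathcal{I}_\theta$ is a scalar multiple of the identity, it just moves the factor $\kappa$ from the spectral bound into the score bound), and if anything your version is slightly more explicit than the paper's, which leaves the score-coordinate maximization implicit.
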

The proof can be found in Appendix \ref{App: bound_computations}.

Assuming the target time series follows a Student’s $t_{\nu}$ distribution induces a specific loss for the natural gradient update that is inherently bounded. Intuitively, the FIM provides a bound on the Jacobian, as it involves the product of the Jacobian and its transpose (see Appendix \ref{App: bound_computations} for more details). At the same time, the Student’s $t_{\nu}$ distribution bounds the gradient of the loss with respect to the outputs (see Figure \ref{fig:student_scores}). As a result, the full natural gradient with a Student’s $t_{\nu}$ negative log-likelihood loss has a bounded $L_2$-norm, making the optimization process more robust to outliers. 
Figure \ref{fig:grad_bound} shows the effects of bounded updates on a toy example of a noisy sinusoid with outliers. OGD’s gradients spike in response to outliers, destabilizing the optimization, whereas NatSR’s natural gradient with a Student’s $t_{\nu}$ loss remains bounded, yielding stable and robust updates.

\begin{figure}
    \centering
    \begin{subfigure}{0.49\textwidth}
        \centering
        \includegraphics[width=\linewidth]{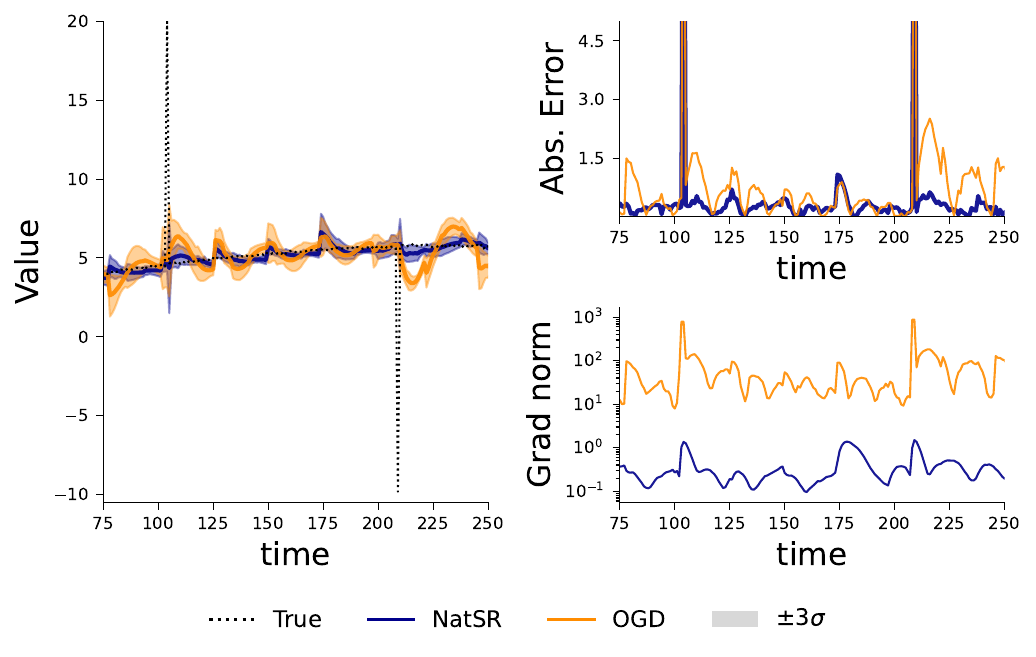}
        \caption{}
        \label{fig:grad_bound}
    \end{subfigure}%
    \hfill
    \begin{subfigure}{0.49\textwidth}
        \centering
        \includegraphics[width=\linewidth]{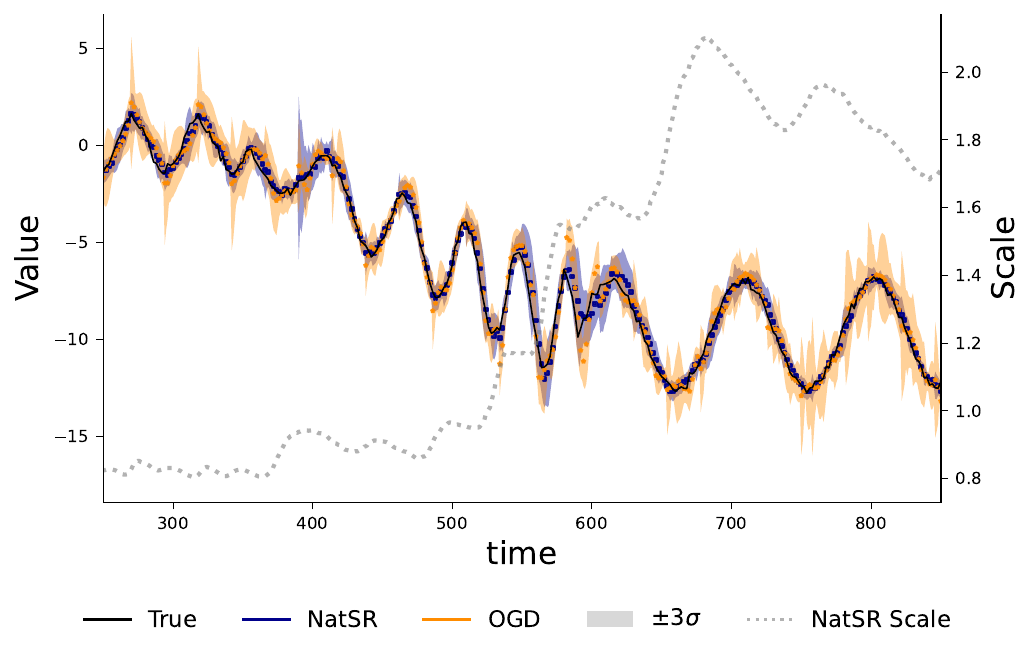}
        \caption{}
        \label{fig: role of scale}
    \end{subfigure}
    \caption{Mean predictions and standard deviations of NatSR and simple Online Gradient Descent (OGD) on a noisy sinusoidal wave under two challenging conditions: (left) with outliers and (right) with changing regimes, each repeated ten times. In the outlier setting (a), OGD is destabilized and requires several iterations to recover accurate forecasts, whereas NatSR remains stable. The bottom-right panel in (a) highlights the difference in update magnitudes: OGD’s gradients grow by an order of magnitude in response to the outlier, while NatSR’s remain comparable to those from normal errors. In the regime-change setting (b), the scale rises during transitions, allowing for larger gradients and faster updates, and decreases again once the series stabilizes. This dynamic scaling, combined with second-order information from the FIM, enables NatSR to adapt rapidly to changes in both amplitude and frequency, as reflected by the smaller standard deviations during the second regime compared to OGD.}
\end{figure}

\subsection{Memory Buffer}\label{seq: Memory Buffer}
We showed that the natural gradient update shares the same information-theoretic optimality with score-driven models and that the use of the Student's t log-likelihood can bound the update. Now we add to this filtering method the ability to accumulate knowledge without catastrophic forgetting. We use a simple Experience Replay approach \citep{chaudhry2019continual} with a second-order approximation \citep{urettini2025online} to recover a natural gradient update.  At each step in time, the optimal second-order update is the solution of the problem
\begin{equation}\label{eq:TS_opt}
\begin{aligned}
\min_{\delta} \quad & \nabla_{N_t}^T \delta +  \frac{1}{2}\delta^T \mathbf{H}_{N_t} \delta +  \lambda\nabla_{B_t}^T \delta +  \frac{\lambda}{2}\delta^T \mathbf{H}_{B_t} \delta \quad
\text{subject to}
&\frac{1}{2}||\delta||_2^2 \leq \epsilon,
\end{aligned}
\end{equation}
which is solved by
\begin{equation*}
    \delta_t^* = -(\mathbf{H}_{N_t} + \lambda\mathbf{H}_{B_t} +\tau \mathbf{I})^{-1}(\nabla_{N_t} +\lambda \nabla_{B_t}),
\end{equation*}
where $\delta_t^*$ is the optimal optimization step given the information at time $t$, $N_t$ are the new observations done at time $t$, $B_t$ is a set of observations sampled from the buffer $\mathcal{B}$, $\tau$ is the Tikhonov regularization, $\lambda$ the importance given to the past, $\mathbf{H}$ the Hessian matrix and $\nabla$ the gradient vector.

Following \citet{urettini2025online}, we note that the Fisher Information matrix $\mathcal{I}$ is a Generalized Gauss-Newton matrix that approximates the Hessian \citep{martens2020new}. Hence we get:
\begin{equation}
\label{eq: OCAR_TS_step}
    \delta_t^* = -(\mathcal{I}_{N_t} + \lambda\mathcal{I}_{B_t} +\tau \mathbf{I})^{-1}(\nabla_{N_t} +\lambda \nabla_{B_t}).
\end{equation}
To improve optimization speed \citep{yuan2016influence, sutskever2013importance} and reliability with noisy data, such as time series data, we take inspiration from ADAM \citep{DBLP:journals/corr/KingmaB14} and smooth the natural gradient update with an EMA:
\begin{equation}
    \delta_t^{EMA} =  \alpha^{EMA} \delta_t^* + (1-\alpha^{EMA}) \delta_t^{EMA}. 
\end{equation}

\subsection{Dynamic Scale}\label{seq: Dyn Scale}
 The Student's t decreases the score for larger errors after a certain threshold that depends on the degrees of freedom (see Figure \ref{fig:student_scores}). Unfortunately, this approach may result in slow updates during sudden regime changes due to the small score. This would be in contrast with the \textit{fast adaptation} desiderata of OCL.

To address this, we propose to adjust the step size dynamically. First, notice that the natural score of the Student's t mean converges to the (unbounded) Gaussian score when we increase the scale parameter $s$ (see Figure \ref{fig:student_scores}) of the Student's t likelihood \ref{eq: student_likelihood}. When a new regime occurs, the model error will increase substantially, and with it, the observed variance of the target conditional on our predicted means. By also increasing $s$ to follow the observed variance, the step size increases with it. 

\begin{wrapfigure}{r}{0.6\textwidth}
  \vspace{-1em}
  \centering
  \begin{subfigure}[t]{0.48\textwidth}
    \centering
    \includegraphics[width=\linewidth]{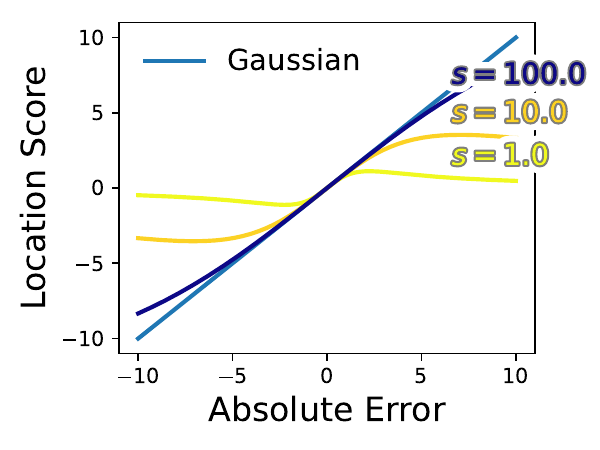} 
    \label{fig:b}
  \end{subfigure}
  \hfill
  \begin{subfigure}[t]{0.48\textwidth}
    \centering
    \includegraphics[width=\linewidth]{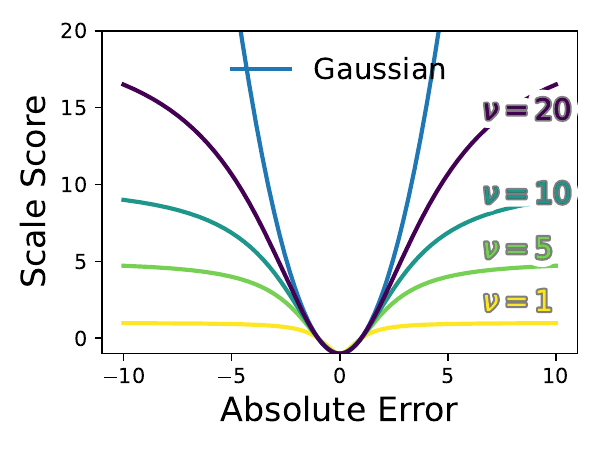} 
    \label{fig:c}
  \end{subfigure}
  \vspace{-1em}
  \caption{Natural score of the Student's t compared to a Gaussian score. \textit{Left}: score of the mean for different scales. \textit{Right}: score of the scale parameter for different $\nu$.}
  \label{fig:student_scores}
  \vspace{-0.5cm}
\end{wrapfigure}

Unfortunately, this step size is not directly controlled by $s$ when the Tikhonov regularization is used (like in our case). As a matter of fact, the update bound we found in equation \ref{eq:bound} does not depend on $s$. To recover the same effect as in score-driven models, we propose to set the Tikhonov regularization as $\tau = \frac{0.9\beta}{1+s^2}+\frac{0.1\beta}{s^2}$ (see Appendix \ref{app: tikhonov} for the derivation) where $\beta$ is a scalar hyperparameter. The result is that an increase in $s$ would increase the gradient bound (Eq. \ref{eq:bound}) through the decrease of $\tau$. 

To maintain robustness against outliers, we need to update $s$ gradually with bounded updates, so that only multiple consecutive unexpected observations would significantly increase the bound. We propose to use once again the score-driven update strategy, deriving the score of the scale from the same log-likelihood used as objective for our model $f_w(x)$. 
The score-driven update using the score of a Student t log-likelihood related to $s^2$ regularized by its relative Fisher information is \citep{artemova2022score}: 
\begin{equation*}
    s^2_{t+1} = s^2_{t} + \alpha_{s} \frac{s_{t}^2\nu(e_t^2-s_t^2)}{s_t^2\nu+e_t^2},
\end{equation*}
where $e_t = y_t-f_w(X_t)$ and $\alpha_s$ is a learning rate. Additionally, we enforce a lower bound on $s^2_{t+1}$ to avoid values too small. In Figure \ref{fig:student_scores}, the regularized score for the scale is visualized.
The effect of this dynamic scale is as follows: when the squared error is larger than $s^2_t$, the variance is larger than expected, and the scale $s_t^2$ starts to adjust, with a speed controlled by the degrees of freedom $\nu$ and the parameter $\alpha_s$. If the observed error is an outlier, the effect is limited to this single bounded update of the scale. If instead the squared error remains larger than $s^2_t$, it is interpreted as a regime shift and $s^2_t$ continues to increase. The increase of $s^2_t$ directly influences the natural gradient (see Appendix \ref{app: tikhonov}), and increases the upper bound of the update, allowing the model to adapt faster. On the other hand, when the predictions of the model are accurate, the scale is lowered, decreasing the bound and making the model more stable. An example of the possible effects of the dynamic scale is shown in Figure \ref{fig: role of scale}, where the value of the scale increases as the regime of the data changes, allowing for larger updates and faster adaptation.  

To sum up, we showed that natural gradient, besides its well-known properties as an optimizer \citep{amari1998natural, martens2020new, kunstner2019limitations}, can also be interpreted as a score-driven model, sharing the same information-theoretic optimality for nonstationary data. When we combine a Student's t negative log-likelihood loss function, the weight update is bounded and robust to outliers. Adding a memory buffer to this allows the model to ``remember'' also past regimes, accumulating knowledge in time. Finally, with the dynamic score, we enable both stability and fast adaptation. All of this is the {\bf Nat}ural {\bf S}core-Driven {\bf R}eplay (NatSR). The full algorithm can be found in Appendix \ref{app: NatSR algorithm} and some additional implementation details in Appendix \ref{app: practical_implementation}.

\section{Experiments}
\label{sec: experiments}

\begin{table}[t]
\centering
\resizebox{0.7\textwidth}{!}{
\begin{tabular}{cc|ccccc|c}
\toprule
Dataset & Pred. Len & OGD & ER & DER++ & FSNET & OneNet &  NatSR (Ours) \\
\hline
\multirow{3}{*}{ECL} 
 & 1  & $1.67$ & $1.43$ & $1.11$ & \underline{$0.96$} & $\textbf{0.64}$ & $3.53$ \\
 & 24 & $3.12$ & $3.21$ & $2.89$ & \underline{$1.42$} & $\textbf{0.92}$ & $4.01$ \\
 & 48 & $3.27$ & $3.01$ & $2.96$ & \underline{$1.44$} & $\textbf{0.96}$ & $4.14$ \\
\hline 
\multirow{3}{*}{ETTh1} 
 & 1  & $0.87$ & $0.83$ & \underline{$0.82$} & $0.92$ & $0.83$ & $\textbf{0.79}$ \\
 & 24 & $1.50$ & $1.49$ & $1.45$ & \underline{$1.08$} & $1.38$ & $\textbf{0.97}$ \\
 & 48 & $1.46$ & $1.42$ & $1.42$ & \underline{$1.16$} & $1.39$ & $\textbf{1.12}$ \\
\hline 
\multirow{3}{*}{ETTh2} 
 & 1  & $1.14$ & $1.09$ & $1.08$ & $1.10$ & \underline{$1.06$} & $\textbf{1.01}$ \\
 & 24 & $1.65$ & $1.60$ & $1.60$ & \underline{$1.37$} & $1.56$ & $\textbf{1.23}$ \\
 & 48 & $1.63$ & $1.62$ & $1.61$ & \underline{$1.48$} & $1.61$ & $\textbf{1.39}$ \\
\hline
\multirow{3}{*}{ETTm1} 
 & 1  & $1.18$ & $1.03$ & \underline{$1.01$} & $1.10$ & $1.05$ & $\textbf{0.97}$ \\
 & 24 & $2.19$ & $1.82$ & $1.83$ & \underline{$1.45$} & $1.91$ & $\textbf{1.16}$ \\
 & 48 & $2.19$ & $1.91$ & $1.81$ & \underline{$1.52$} & $2.04$ & $\textbf{1.33}$ \\
\hline
\multirow{3}{*}{ETTm2} 
 & 1  & $1.36$ & $1.26$ & $1.24$ & \underline{$1.14$} & $1.15$ & $\textbf{1.12}$ \\
 & 24 & $2.03$ & $1.80$ & $1.81$ & \underline{$1.48$} & $1.79$ & $\textbf{1.37}$ \\
 & 48 & $2.01$ & $1.85$ & $1.82$ & \underline{$1.51$} & $1.84$ & $\textbf{1.50}$ \\
\hline
\multirow{2}{*}{Traffic} 
 & 1  & $0.84$ & $0.79$ & $0.78$ & \underline{$0.70$} & $\textbf{0.62}$ & $0.89$ \\
 & 24 & $1.05$ & $1.07$ & \underline{$0.95$} & $0.96$ & $\textbf{0.91}$ & $1.14$ \\
\hline
\multirow{3}{*}{WTH} 
 & 1  & $1.47$ & $1.30$ & $1.44$ & $1.19$ & \underline{$1.06$} & $\textbf{1.04}$ \\
 & 24 & $1.98$ & $1.90$ & $1.84$ & \underline{$1.44$} & $1.73$ & $\textbf{1.25}$ \\
 & 48 & $1.97$ & $1.89$ & $1.86$ & \underline{$1.46$} & $1.82$ & $\textbf{1.39}$ \\ \bottomrule
\end{tabular}
}
\caption{Average MASE across 3 runs. Best in \textbf{bold}, second best \underline{underlined}.}
\label{tab:OCL_TS_mase}
\end{table}

We empirically validate our proposal following the setup in \citet{pham2023learning}. 
An extended discussion of the experimental setup is provided in Appendix \ref{app: experimental details}. Our full repository used for the experiments can be found at \url{https://anonymous.4open.science/r/NatSR}.

\textbf{Baselines:} We compare our method against state-of-the-art methods such as Experience Replay (ER) \citep{chaudhry2019continual}, DER++ \citep{buzzega2020dark}, FSNET \citep{pham2023learning}, and OneNet \citep{wen2023onenet}. We also tested a simple online gradient descent approach (OGD), where the target is to adapt to newly observed data, with no memorization objective. 

\textbf{Datasets:} We test on the same real-world datasets as FSNET, covering a wide range of sources and behaviours. The ETT dataset \citep{zhouInformerEfficientTransformer2021} collects the oil temperature and other 6 power load features from different transformers with hourly (``h'') or 15-minute (``m'') frequency. ECL\footnote{\url{https://archive.ics.uci.edu/ml/datasets/ElectricityLoadDiagrams20112014}} represents the electricity consumption of 321 clients from 2012 to 2014. WTH\footnote{\url{https://www.ncei.noaa.gov/data/local-climatological-data/}} is a collection of weather features from multiple locations in the US. Traffic\footnote{\url{https://pems.dot.ca.gov/}} measures the traffic on the San Francisco Bay Area freeways.

\textbf{Experimental Procedure:} All methods undergo an offline warm-up phase using the first 20\% of the data for training, and the following 5\% for validation and early stopping. This phase is always done using AdamW \citep{loshchilov2017decoupled} with a learning rate schedule. The remaining 75\% of the data is used for online training and evaluation, with model updates at each new observation. During the online phase, the optimizer is reset and possibly changed, using a different learning rate. The optimal value of this online learning rate is selected with a hyperparameter tuning in a full online training with the ETTh1 dataset. The tuning of the online learning rate is the only difference with the FSNET approach. We believe that without a transparent tuning of this parameter, it is very hard to compare different methods. Following the guidelines of \citet{godahewaMonashTimeSeries2021}, we use MASE \citep{hyndman2006another} as our main evaluation metric.

\subsection{Results}

\begin{figure}[t]
    \centering
    \resizebox{\textwidth}{!}{
    \begin{tabular}{ccc}
        \begin{subfigure}[b]{0.32\textwidth}
            \centering
            \includegraphics[width=\textwidth]{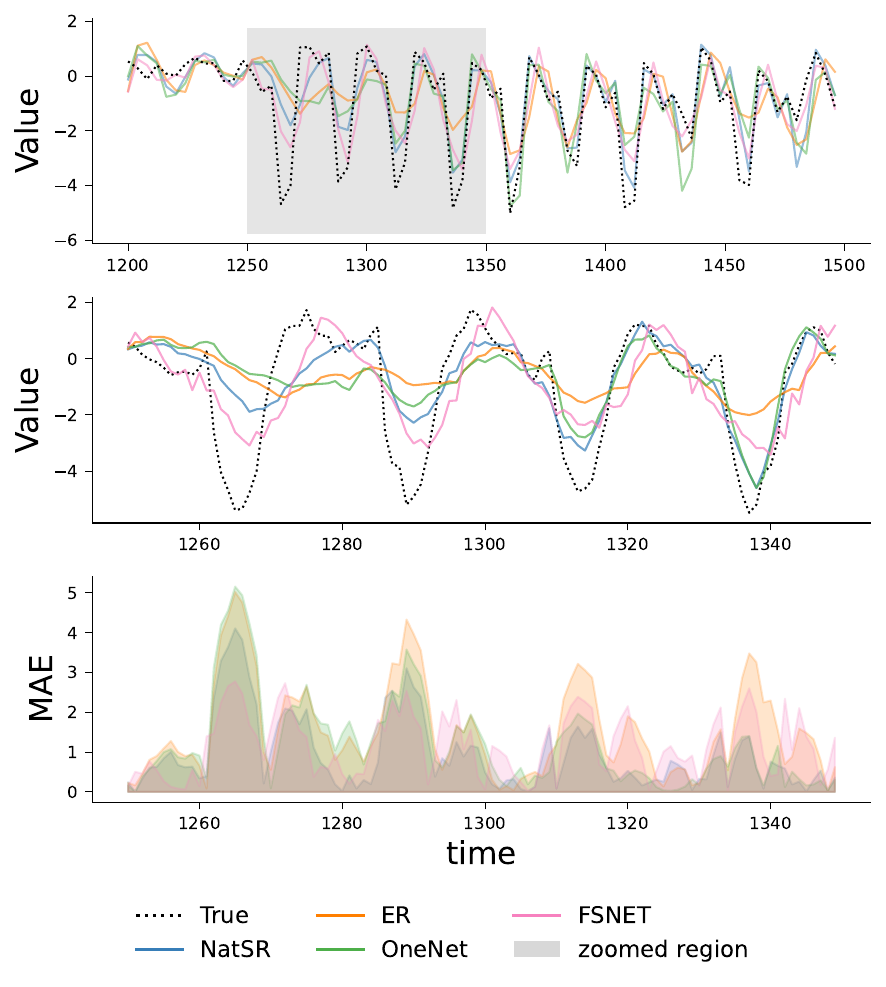}
            \label{fig:etth1}
        \end{subfigure} &
        \begin{subfigure}[b]{0.32\textwidth}
            \centering
            \includegraphics[width=\textwidth]{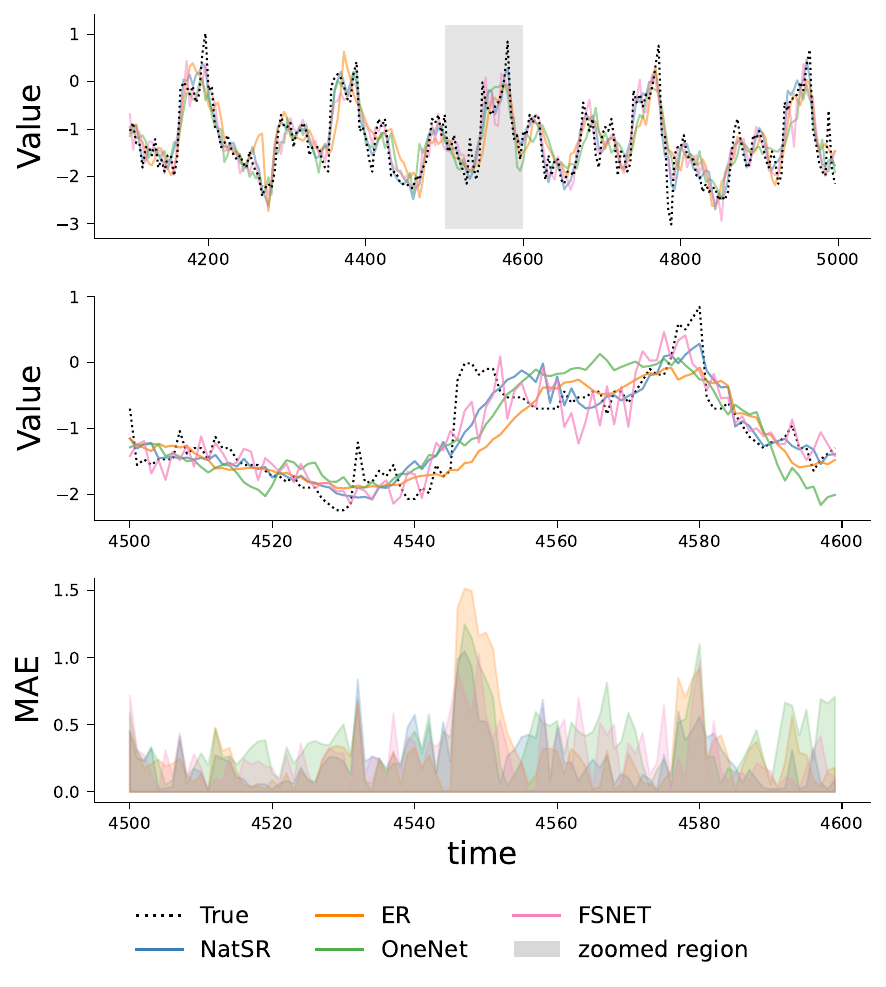}
            \label{fig:ettm1}
        \end{subfigure} &
        \begin{subfigure}[b]{0.32\textwidth}
            \centering
            \includegraphics[width=\textwidth]{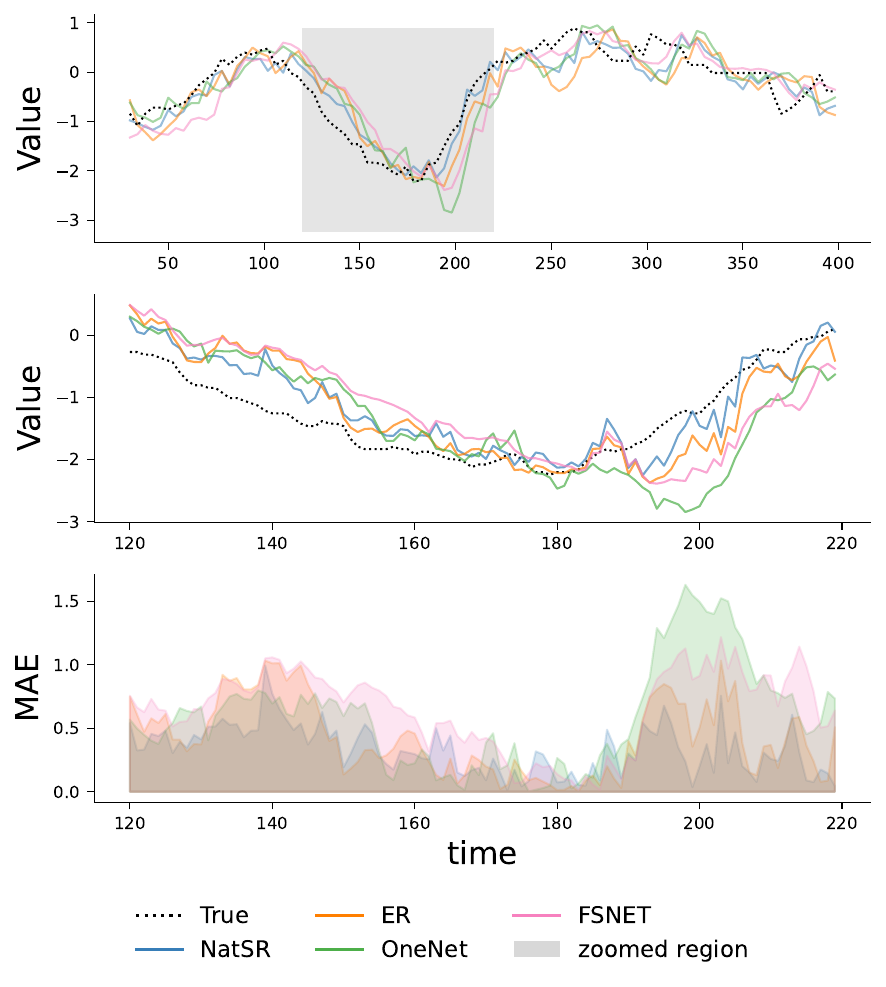}
            \label{fig:wth}
        \end{subfigure}
    \end{tabular}
    }
    \vspace{-1.0em}
    \caption{Forecasting results on three datasets: (left) ETTh1 demonstrates the model’s ability to adapt quickly; (middle) ETTm1 illustrates the stability of our model, producing less noisy predictions compared to baselines such as FSNET; (right) WTH highlights the importance of replay, as NatSR and ER achieve the best performance when revisiting previously observed input ranges.}
    \label{fig:predictions}
        \vspace{-0.5cm}
\end{figure}

In Table \ref{tab:OCL_TS_mase}, we report the average MASE over three runs with different random seeds. Appendix \ref{app: exp results} explores additional configurations of NatSR, FSNET, and OneNet, such as FSNET and OneNet with MSE losses and a less conservative version of NatSR with $\nu=500$ and AdamW optimizer. The configurations shown in Table \ref{tab:OCL_TS_mase} are the best for each method. We also report the standard deviations and training times of the online phase in Appendix \ref{app: exp results}. Figure \ref{fig:predictions}, instead, gives a more qualitative analysis of our method, compared with our baselines in different scenarios.

NatSR obtained the best MASE on 5 of 7 datasets. It is interesting to note that whenever NatSR is the best method, FSNET follows as second-best, suggesting that whenever continual learning is fundamental, CL-focused solutions are necessary, with NatSR being the better solution. Notice that NatSR achieves these results by only changing the loss and the optimizer without any architectural solution customized for time series like FSNET and OneNet. Unfortunately, we notice that on two datasets, ECL and Traffic, NatSR reaches a higher MASE compared to more complex methods. We noticed that these two datasets have the highest number of features, and are the only ones where OGD performs better than ER in at least a prediction length, suggesting the possibility that these datasets require more plasticity than stability.


Besides the results of NatSR, these experiments confirmed once again the potential of online continual learning approaches to improve online time series forecasting. Standard OGD is rarely able to overcome ER, and more sophisticated OCL methods perform much better. Each method is evaluated on its online forecasting ability and on streams of data that are not a synthetic simulation of a task or domain-switching setting. These are real data, actually observed in a specific time order, that can suffer real or virtual drifts naturally. Still, OCL methods show large improvements when compared to standard online gradient descent learning, underlying the importance of learning stability in OTSF.

\subsection{Ablation Study}

\begin{table}[t]
    \centering
    \resizebox{0.7\textwidth}{!}{
    \begin{tabular}{cc|cc|cc|cc}
                   &  & \multicolumn{2}{c|}{ETTh1} & \multicolumn{2}{c|}{ETTm1} & \multicolumn{2}{c}{WTH} \\
                  Scale & Replay & MASE & Rel. $\Delta$ & MASE & Rel. $\Delta$ & MASE & Rel. $\Delta$ \\
        \hline
        \checkmark & \checkmark & 0.97 & - & 1.16 & - & 1.25 & -  \\
        \checkmark & -          & 1.10 & -13\% & 1.29 & -11\% & 1.35 & -8\%  \\
        - & \checkmark          & 1.02 & -5\% & 1.22 & -5\% & 1.32 & -6\%  \\
        - & -                   & 1.15 & -19\% & 1.34 & -16\% & 1.40 & -12\%  \\
    \end{tabular}
    }
    \caption{MASE of NatSR with 50 degrees of freedom and its variants for prediction length 24.}
    \label{tab:ablation}
\end{table}

To better understand the role of each component of NatSR, we conduct an ablation study by selectively removing two key mechanisms: the replay strategy and the dynamic scale. Table \ref{tab:ablation} reports the MASE of each variant, along with the relative performance loss compared to the original version of our method. All results are obtained with 50 degrees of freedom for the Student’s t-distribution and a forecasting horizon of 24.

The results clearly indicate that both components are beneficial, although their contributions differ in strength. Removing the replay buffer leads to drops in performance between 8\% and 13\%, while the dynamic scale causes smaller but consistent losses of about 5-6\%. The importance of replay is in line with expectations, as Experience Replay improves substantially compared to online gradient descent in our experiments. Interestingly, however, the relative gain from adding replay within our method is even larger than the gain obtained by simply equipping SGD with replay. This suggests a synergistic effect: replay not only provides access to past samples, but also interacts favorably with our second-order optimization scheme.

A notable observation arises when both mechanisms are removed: the resulting degradation, up to 19\%, is equal to or larger than what one might predict from the sum of the individual effects. This suggests that our method is able to leverage replay and dynamic scaling in a complementary way: replay provides stability across tasks, while scaling enhances adaptability. Their joint effect is greater than the sum of the parts, indicating that the full method is particularly effective at handling non-stationary data streams.

\subsection{Discussion and Limitations}

With NatSR, we introduced a method that is rooted in score-driven models and natural gradient descent. The use of the Fisher Information together with Experience Replay has the double effect of accelerating learning \citep{amari1998natural} while penalizing directions that are important for past data. While we showed the empirical potential on standard end-to-end training, future works can explore the extension of NatSR to the finetuning of foundation models, where keeping intact previous knowledge, while also learning efficiently from scarce data, is of primary importance.

In our proposal, the use of the Student's t is fundamental to obtaining a bounded update, something that can be crucial in datasets where sudden changes and outliers can disrupt learning. On the other hand, even when using the dynamic scale to adjust the bound, some datasets require much less stability and more plasticity. ECL and Traffic are examples of this. They both present large, sudden, and persistent regime changes, where memory and stability are not rewarded. The robustness of NatSR, while very useful in the other datasets, still results in updates that are too conservative for the fast changes in ECL and Traffic, causing a larger error. As a preliminary step towards a possible solution, we notice that if we increase the degrees of freedom and use ADAM \citep{DBLP:journals/corr/KingmaB14} on top of our method, we obtain a version of NatSR that is much stronger on ECL and Traffic, but weaker on the other datasets (App. \ref{app: experimental details}). Designing a single method that provides both fast adaptation and robustness to forgetting is still an open challenge. Time series forecasting is particularly complex in this regard, as different datasets can require widely different approaches. Moreover, it is possible that the hyperparameter tuning done at the beginning will become invalid in the future due to sudden regime changes. Exploring adaptive hyperparameters in the same fashion as our dynamic scale could become fundamental to advancing online time series forecasting.

\section{Conclusion}

In this paper, combining theoretical and empirical insights from online continual learning and econometrics, we proposed NatSR, a novel method for online time series forecasting. We proved a formal connection between score-driven models and natural gradient descent, showing its information-theoretic optimality. We also proved that the combination of natural gradient and Student's t loss yields a bound on the update, making the learning more robust. Then, we introduced a dynamic scale of the Student's t to adapt online the plasticity of the model. Building on these insights, we proposed NatSR as a combination of natural gradient, Student's t loss, memory buffer, and dynamic scale. Empirical results show competitive performance against state-of-the-art methods, showing the potential of developing new OCL methods starting from time series analysis theory. Overall, OTSF provides a challenging and realistic application scenario for continual learning methods, where the balance between stability and plasticity is dataset-dependent and may change over time. The open question is whether this trade-off can be adjusted automatically to have a single robust method for every dataset.



\bibliography{references}
\bibliographystyle{iclr2026_conference}

\appendix
\section{Derivations of the Theoretical Results}
In this section, we demonstrate the theoretical results by first reviewing some facts for the GAS model and highlighting its similarities with the neural networks when the optimization is the natural gradient (section \ref{sec: Review of GAS}). Then in section \ref{app: proofs1} we state the proof of the propositions for the optimality of the parameters and finally in section \ref{App: bound_computations} we demonstrate the update bound.

\subsection{A note on GAS model}\label{sec: Review of GAS}

Setting $S_t=\mathbf{I}$ in the GAS model would make the filtering of $w_t$ equivalent to SGD. More interestingly, \cite{crealGeneralizedAutoregressiveScore2013} suggested the use of the Fisher information matrix as the rescaling matrix $S_t$. This would make the GAS update of $w_t$ equivalent to a natural gradient descent \citep{amari1998natural}. We are not the first ones to draw a connection between natural gradient descent and time series filtering.  \cite{ollivier2018online} already showed formally that natural gradient descent can be cast as a special case of the Kalman filter. With GAS models, the connection is more straightforward, as it directly derives from the definition of the GAS update itself by considering as time-varying parameters the weights of the network and not the likelihood parameters themselves. 
Hence, we can interpret the online optimization process not as a way to find a static optimum as more data arrives, but as a way to respond to new information, filtering the values of the weights and finding the best way to ``follow" a changing loss landscape. Following the suggestions of \cite{crealGeneralizedAutoregressiveScore2013}, and the results of \cite{ollivier2018online}, we suggest adapting to new observations using the log-likelihood score regularized by the inverse FIM. Moreover, GAS models have been widely used with high-kurtosis distributions like the Student's-t distribution, gaining robustness to outliers \citep{artemova2022score}.
We show that the combination of the inverse FIM gradient preconditioning and of a Student's-t negative log-likelihood can be justified by the generation of a bound on the update norm.

\subsection{Proofs for section \ref{sec: Info-Theo optimality}}\label{app: proofs1}
In this section we prove propositions (\ref{prop1})-(\ref{prop2}).

First we show that finding the parameter that minimizes $\text{KL}_t(\theta)$ is equivalent with finding the one that maximizes the conditional expectation of ${p}(y|\theta)$ with respect to the true statistical model or in other words we justify the second equality in Eq.(\ref{eq: objective}):
\begin{align}\label{eq: just obj}
    \theta_t^* & = \argmin_{\theta\in\Theta}\Big[\int_{\R^d}q_t(y)\log q_t(y)dy-\int_{\R^d}q_t(y)\log {p}(y|\theta)dy\Big]\nonumber\\
    & =  \argmin_{\theta\in\Theta}\Big[-\int_{\R^d}q_t(y)\log {p}(y|\theta)dy\Big]\nonumber\\
    & = \argmax_{\theta\in\Theta}\E_{y\sim q_t}[\log {p}(y|\theta)]
\end{align}
\begin{proof}[Proof of proposition (\ref{prop1})]
From assumption (A1) we select $w_t^*\in W$ such that $f_{w_t^*}(x_t) = \theta_t^*$, then from Eq.(\ref{eq: just obj}) we observe that $\theta_t^*$ maximizes the expected log-likelihood with respect to $\theta$ under $q_t$. Since $q_t$ is the true statistical model of the target time-series it corresponds to its empirical distribution, thus $\theta_t^*$ maximizes the plain log-likelihood, 
\begin{equation*}
    \frac{\partial \log{p}(y_{t}|\theta)}{\partial\theta}\bigg|_{\theta = \theta_t^*}=0
\end{equation*}
and as a result $\nabla g_t(w_t^*) = 0$.

From $(A3)$, for $w_t,w_t^*\in W$ we get
\begin{align*}
    \langle \mathcal{I}_t(w_t)^{-1} \nabla g_t(w_t)- \mathcal{I}_t(w_t^*)^{-1} \nabla g_t(w_t^*),w_t-w_t^* \rangle &\leq -\frac{1}{c}\| \mathcal{I}_t(w_t)^{-1} \nabla g_t(w_t)- \mathcal{I}_t(w_t^*)^{-1} \nabla g_t(w_t^*) \|^2\\
    \langle \mathcal{I}_t(w_t)^{-1} \nabla g_t(w_t),w_t-w_t^* \rangle &\leq -\frac{1}{c}\| \mathcal{I}_t(w_t)^{-1} \nabla g_t(w_t)\|^2\
\end{align*}
    \begin{align*}
        \|\E_{t-1}[w_{t+1}]-w_t^*\|^2 & = \|w_{t}+\eta \mathcal{I}_{t}(w_{t})^{-1}\nabla g_t(w_{t})-w_t^*\|^2\\
        & = \|w_{t}-w_t^*\|^2+2\langle \eta\mathcal{I}_{t}(w_{t})^{-1}\nabla g_t(w_{t}),w_{t}-w_t^*\rangle+\eta^2
        \|\mathcal{I}_{t}(w_{t})^{-1}\nabla g_t(w_{t})\|^2\\
        & \leq \|w_{t}-w_t^*\|^2-2\frac{\eta}{c}\|\mathcal{I}_{t}(w_{t})^{-1}\nabla g_t(w_{t})\|^2+\eta^2\|\mathcal{I}_{t}(w_{t})^{-1}\nabla g_t(w_{t})\|^2\\
        & = \|w_{t}-w_t^*\|^2-\eta\bigg(\frac{2}{c}-\eta\bigg)\|\mathcal{I}_{t}(w_{t})^{-1}\nabla g_t(w_{t})\|^2.
    \end{align*}
    We note that 
    \begin{equation*}
        \eta\bigg(\frac{2}{c}-\eta\bigg)\|\mathcal{I}_{t}(w_{t})^{-1}\nabla g_t(w_{t})\|^2>0
    \end{equation*}
    hence 
    \begin{equation*}
        \|\E_{t-1}[w_{t+1}]-w_t^*\|<\|w_{t}-w_t^*\|.
    \end{equation*}
\end{proof}
\begin{proof}[Proof of proposition (\ref{prop2})]
From assumption (A1) we select $w_t^*\in W$ such that $f_{w_t^*}(x_t) = \theta_t^*$. Since we are interested in the properties of $f$ with respect to the weights we will slightly abuse the notation and write $f(w)$ instead of $f_{w}(x_t)$ for $w\in W$. From assumption (A4) there are constants $L,l>0$ such that for any $w_1,w_2\in W$
\begin{equation}\label{eq: Bi-Lip}
    l\|w_1-w_2\|\leq\|f(w_1)-f(w_2)\|\leq L\|w_1-w_2\|
\end{equation}
then we write
    \begin{align*}
        \|\theta_t(\E_{t-1}[w_{t+1}])-\theta_t^*\| & = \|f(\E_{t-1}[w_{t+1}])-f(w_t^*)\|\\
        & \leq L \|\E_{t-1}[w_{t+1}]-w_t^*\|\\
        & <L\|w_t-w_t^*\|\\
        & \leq \frac{L}{l}\|f(w_t)-f(w_t^*)\| = \frac{L}{l}\|\theta_t(w_t)-\theta_{t}^*\|
    \end{align*}
    the second inequality is due to the optimality of the weights (see proposition (\ref{prop1})).
\end{proof}

\subsection{Proof for section \ref{sec: bound update}}\label{App: bound_computations}
In order to lighten the notation we use the following conventions:

First, we omit the time index and the weight subscript thus we write $f(x)$ instead of $f_{w_t}(x_t)$.

The score that corresponds to the neural network (see section \ref{sec: from score to nat score}) is
    \begin{align*}
        \nabla_w(y) & = \underbrace{\frac{\partial \log p(y|\theta)}{\partial\theta}}_{\nabla_{\theta}\log p(y\mid\theta)}\underbrace{\frac{\partial\theta}{\partial w}}_{J_w}\\[5pt]
        & = J_w^\intercal \nabla_{\theta}\log p(y\mid\theta)
    \end{align*}
    where $\frac{\partial\theta}{\partial w}$ is the Jacobian matrix and we denote it as $J_w$. Notice that every time we consider the score is before the weight update hence the gradient is with respect to the provisional output $f_{w_t}(x_t)$ and not the final (after the update) $f_{w_{t+1}}(x_t)$.
\begin{proof}[Proof of Theorem \ref{thrm}]
The score, using the Tikhonov regularization \citep{martens2020new} and the definition of the FIM, i.e. that is defined as the variance of the score, conditional on the input \citep{kunstner2019limitations}, it is:
\begin{align*} \label{ng_eq}
    \tilde\nabla_w \log p (y| f_w(x)) & = \Big(  \mathbb{V}\big[J_w^T \, \nabla_{f(x)}\log p(y|f(x))\mid x\big] + \tau \mathbf{I}\Big)^{-1}\nabla_w(y)\\& =  \Big(  \mathbb{V}\big[J_w^T \, \nabla_{f(x)}\log p(y|f(x))\mid x\big] + \tau \mathbf{I}\Big)^{-1} J_w^T \, \nabla_{f(x)}\log p (y| f(x)))\\
    & = \Big( J_w^T \, \mathbb{V}\big[\nabla_{f(x)}\log p (y| f(x))) \mid x\big] \, J_w + \tau \mathbf{I}\Big)^{-1} J_w^T \, \nabla_{f(x)}\log p (y| f(x)))\\
    & = \Big( J_w^T \, \kappa \mathbf{I} \, J_w + \tau \mathbf{I}\Big)^{-1} J_w^T \, \nabla_{f(x)}\log p (y| f(x))),~~~~~\kappa = \frac{\nu+1}{(\nu+3)s^2}\\
    & = \underbrace{V(\kappa\Sigma^T\Sigma+\tau\mathbf{I})^{-1}\Sigma^TU^T}_{B_1} \underbrace{\nabla_{f(x)}\log p (y| f_w(x)))}_{B_2}.
\end{align*}
The third equality is due to the fact that the Jacobian matrix is conditionally independent from the input given the output.

The fourth equality is due to assumption of the Student's-t distribution 
\begin{equation*}
    \mathbb{V}\big[\nabla_{f(x)}\log p (y| f(x)) |x\big] = \frac{\nu+1}{(\nu+3)s^2}\mathbf{I}.
\end{equation*}
For the fifth equality we apply the SVD to the Jacobian matrix, i.e. $J_w = U\Sigma V^\intercal$, for $\Sigma$ diagonal, 
then taking the $L_2$-norm we get
\begin{align*}
   \Vert\tilde\nabla_w \log p (y| f(x))\|_2 &\leq \frac{1}{2\sqrt{\kappa \tau}} \Vert\nabla_{f(x)}\log p (y| f(x)))\Vert_2 \\ &\leq \frac{(\nu +1)\sqrt{m}}{4s\sqrt{\kappa\tau\nu}}\\
    & = \frac{1}{4} \sqrt{\frac{(\nu+1)(\nu+3)m}{\tau\nu}}
\end{align*}
For the first inequality we compute the bound by using the definition of spectral norm as the maximum singular value of the matrix. The maximum is reached for $\sigma_i = \sqrt{\tau/\kappa}$.
\begin{equation*}
    \|B_1\|_2=\Vert V(\kappa\Sigma^T\Sigma+\tau\mathbf{I})^{-1}\Sigma^TU^T\Vert_2 = \max_i \frac{\sigma_i}{\kappa\sigma_i^2+\tau} \leq \frac{1}{2\sqrt{\kappa \tau}}.
\end{equation*}
The score of the Student's-t related to the output is:
\begin{equation*}
    B_2= \nabla_{f(x)}\log p (y| f(x))) = -\bigg[\frac{(\nu+1) \,e_1}{\nu s^2 + e_1^2}, ..., \frac{(\nu+1) \,e_m}{\nu s^2 + e_m^2}\bigg],
\end{equation*}
with $m$ the number of outputs, $e_i=y_i-f(x)_i$ the error related to output $i$ and $\nu$ the degrees of freedom.
\end{proof}

\section{Modified Tikhonov Regularizer}\label{app: tikhonov}
With a regime change, the observed variance of the target conditional on the predictions will increase. We then want to also increase the assumed variance through the scale parameter $s$. The increase of $s$ needs to have an effect on the final update similar to what happens in standard score-driven models (see Figure \ref{fig:student_scores}): for $s\to\infty$ the update should converge to a linear function of the error as for the Gaussian assumption. 
To do this, we first write the natural gradient for the Student's t likelihood. Define $e=y-f_w(x)$
\begin{align*}
   \tilde \nabla_w f_w(x) &= \Big(\frac{\nu+1}{(\nu+3)s^2}J_w^TJ_w + \tau\mathbf{I}\Big)^{-1} J_w^T\frac{(\nu+1)e}{\nu s^2+e^2} = \\
   &=\Big(\frac{\nu+1}{(\nu+3)}J_w^TJ_w + s^2\tau\mathbf{I}\Big)^{-1} J_w^T\frac{(\nu+1)e}{\nu +e^2/s^2}.
\end{align*}
Hence, for the natural gradient update with the Tikhonov regularization, the limit for an infinite scale would be:
\begin{equation*}
    \lim_{s\to\infty}\tilde \nabla_w f_w(x) = 0,
\end{equation*}
which is clearly different from the linear function of $e$ we are aiming for.  
Hence, increasing the scale $s$ would not have the desired effect of monotonically accelerating learning. This is also confirmed by the fact that the bound in Eq. \ref{eq:bound} does not depend on $s$. The culprit of this difference between the standard score-driven (Figure \ref{fig:student_scores}) and the natural gradient is the presence of the Tikhonov regularization. To recover the desired effect, propose to set $\tau = \frac{0.9\beta}{1+s^2}+\frac{0.1\beta}{s^2}$ with $\beta$ a scalar hyperparmeter. Note that the effective regularization added to the diagonal of the matrix $J^T_wJ_w$ is $s^2\tau \mathbf{I} $. With our particular choice of $\tau$, we obtain an effective regularizer $s^2\tau \mathbf{I} = \frac{0.9\beta}{1/s^2+1}+0.1\beta$ that is bounded in the interval $[0.1\beta, \beta]$, avoiding numerical instabilities when the scale is very small, but also avoiding regularizations that are too strong. After multiple experiments, we found this heuristic to be the most effective and safe. The limit with the new $\tau$ is:
\begin{align*}
    \lim_{s\to\infty} \Big(\frac{\nu+1}{(\nu+3)}J_w^TJ_w + (\frac{0.9\beta}{1/s^2+1}+0.1\beta)\mathbf{I}\Big)^{-1} J_w^T\frac{(\nu+1)e}{\nu +e^2/s^2} =\\
    \Big(\frac{\nu+1}{(\nu+3)}J_w^TJ_w + \beta\mathbf{I}\Big)^{-1} J_w^T\frac{(\nu+1)e}{\nu },
\end{align*}
obtaining a natural gradient that linearly grows with the error $e$ (as in the Gaussian case) when $s \to \infty$. The scale is now influencing the bound \ref{eq:bound} through its effect on $\tau$: for a larger scale, we have larger update bounds, enabling fast adaptation. 

\section{NatSR Practical Implementation}\label{app: practical_implementation}
In our implementation of the method, we use some approximations and heuristics to make the process more efficient. 

The FIM is approximated using Kronecker-Factored Approximate Curvature (K-FAC) \citep{martens2015optimizing}. This approximation greatly reduces the memory and computational requirements for inverting the FIM when computing the natural gradient. The gradient correlation between layers is ignored, and for each layer, only two small matrices are maintained and inverted: one for the outer product of the layer inputs and one for the outer product of the layer pre-activation gradients. These two Kronecker factors are estimated through an exponential moving average with a default smoothing factor of $0.5$. This fast adaptation allows us to keep only local geometrical information.

The FIM is the expected value of the outer product of the gradient evaluated with respect to the output distribution, not the observed one \citep{kunstner2019limitations}. Following the approach of \textit{nngeometry} \citep{george_nngeometry}, we estimate it through a Monte Carlo approach,  taking $k$ samples from the predicted distribution, and evaluating the gradient of each. In this way, the computation of the FIM is independent of the output shape and can scale to larger output vectors.

To minimize the number of times the FIM needs to be computed and inverted, we reevaluate it only when necessary. When not updated, it simply corresponds to the one used at the previous step. Our heuristics trigger the update when the currently observed loss is in the worst $p\%$ of recently observed losses or, anyway, after a fixed number of steps to avoid situations where the FIM is never updated. The distribution of recently observed losses is estimated assuming a Normal distribution and keeping track of two additional exponential moving averages for the mean and the variance of the loss history.
\section{Hyperarameter Sensitivity}

\begin{table}[t]
\centering
\resizebox{\textwidth}{!}{%
\begin{tabular}{@{}c|ccccc|ccccc|ccccc|ccccc@{}}
\multirow{2}{*}{} & 
\multicolumn{5}{c|}{\Large\boldsymbol{$\alpha$}} &
\multicolumn{5}{c|}{\Large\boldsymbol{$\tau$}} &
\multicolumn{5}{c|}{\large\textbf{buffer size}} &
\multicolumn{5}{c}{\Large\boldsymbol{$\nu$}} \\ 
 & 0.1 & 0.3 & 0.55* & 0.7 & 0.99 
 & 0.01 & 0.14* & 0.5 & 0.9 & 1.5
 & 2 & 4 & 8* & 16 & 32
 & 10 & 25 & 50* & 250 & 500 \\
\midrule
\large\textbf{ETTh1} & -.002 & -.009 & 0.0 & -.008 & -.002 
 & $9.7 \cdot 10^{8}$ & 0.0 & .023 & .053 & .098
 & .020 & .011 & 0.0 & -.013 & -.009
 & -.008 & -.010 & 0.0 & -.008 & -.002 \\
\large\textbf{ETTm1} & .004 & -.001 & 0.0 & .002 & .013 
 & +\large$\infty$ & 0.0 & .030 & .079 & .154
 & .014 & .017 & 0.0 & .000 & -.001
 & .011 & .006 & 0.0 & -.001 & -.003 \\
\large\textbf{WTH} & .005 & .004 & 0.0 & .009 & .038 
 & +\large$\infty$ & 0.0 & .019 & .053 & .093
 & .063 & .026 & 0.0 & .003 & .005
 & -.003 & .005 & 0.0 & .017 & .030 \\
\end{tabular}%
}
\caption{Performance sensitivity analysis for each hyperparameter. Each entry is the $\Delta$MASE between a configuration and the hyper-optimized one (marked with *). We used infinity when training did not converge.}
\label{tab:sensitivity}
\end{table}

To clarify the influence of NatSR’s hyperparameters, we conducted a sensitivity analysis where we vary four key components: (i) the EMA smoothing parameter $\alpha$, (ii) the Tikhonov regularization strength $\tau$, (iii) the replay buffer size, and (iv) the degrees of freedom $\nu$ of the Student’s t distribution. All variations are measured relative to the hyper-optimized configuration, which serves as our baseline. Table \ref{tab:sensitivity} reports the resulting changes in performance.

Among all the parameters, the Tykhonov regularization $\tau$ has by far the strongest impact. When $\tau$ is too high, the regularization overwhelms the adaptation capabilities of NatSR. On the contrary, setting $\tau$ too small destabilizes the training process and can even cause it to fail. Similar sensitivity has been widely noted in natural gradient and other curvature-based learning methods \citep{martens2020new}. In our setting, we noticed that finding the sweet spot can is both dataset-dependent and particularly challenging for rapidly evolving time series.

The remaining hyperparameters show smoother and more predictable behavior. Variations in $\alpha$ and $\nu$ produce only minor deviations across datasets, always consistent with their role in balancing stability and adaptation.
The replay buffer size, in contrast, exhibits unique behavior. Typically, especially for stationary datasets, larger buffers improve performance, as using more data to compute the updates stabilizes the gradients and likelihood estimation. However, overly large replay buffers can limit responsiveness to distributional shifts, degrading performance on highly non-stationary datasets. This limitation can be mitigated with more advanced replay strategies, but this trade-off remains an important consideration.
\newpage
\section{NatSR algorithm}\label{app: NatSR algorithm}
\begin{algorithm2e}[H]
\caption{Natural Score-driven Replay (NatSR)}
\DontPrintSemicolon
\KwIn{network parameters $w$; learning rate $\eta$; EMA parameter $\alpha_{\text{EMA}}$; memory importance $\lambda$; degrees of freedom $\nu$; regularizer $\beta$; scale learning rate $\eta_s$.}

$\mathcal{B} \gets \varnothing$\;
$s \gets 1$\;
$\mathcal{L}_w(D_t, s^2;\nu) = \frac{1}{|D|} \sum_{\{X_i, y_i\} \in D_t} \frac{\nu+1}{2}log\big(1+\frac{(y_i-f_w(X_i)^2)}{\nu s^2}\big)$ \tcp*{\scriptsize Student's t loss function} 
\For{$t \gets 1,2,\dots$}{
    $N_t=\{X_t, y_t\}$\tcp*{\scriptsize Obtaining the new observation} 
    $B_t \subseteq \mathcal{B}$\tcp*{\scriptsize Sample batch from Buffer of past data}

        $L \gets \mathcal{L}_w(N_t, s;\nu) + \lambda\mathcal{L}_w(B_t, s;\nu)$\tcp*{\scriptsize Compute the loss on New and Buffer data} 
        $\nabla_w L$\tcp*{\scriptsize Compute gradient} 
        $\tau \gets \frac{1}{\beta+s^2}$\tcp*{\scriptsize Define Tikhonov regularizer}\tcp*{\scriptsize $\tau$ is a function of the scale $s$} 

        \eIf{L worst $1\%$ of recent Ls}{Update FIM $\gets True$}{Update FIM $\gets False$}

        \If{Update FIM}{
            Compute Monte Carlo K-FAC factors $A$ and $G$ from $N_t$ and $B_t$ (weight $B_t$ by $\lambda$)\;
            \eIf{$F_{\text{EMA}} \neq \varnothing$}{
            \For{$l \gets 1$ \KwTo $L$}{
                $A_{\text{EMA},l} \gets (1-\alpha_{\text{EMA}})\,A_{\text{EMA},l} + \alpha_{\text{EMA}}\,A_l$\tcp*{\scriptsize EMA for the KFAC factors} 
                $G_{\text{EMA},l} \gets (1-\alpha_{\text{EMA}})\,G_{\text{EMA},l} + \alpha_{\text{EMA}}\,G_l$\tcp*{\scriptsize EMA for the KFAC factors} 
                
            }
            $F_{\text{EMA}} \gets \{A_{\text{EMA}},\, G_{\text{EMA}}\}$
            
            }{
            $F_{\text{EMA}} \gets \{A,\, G\}$\tcp*{\scriptsize If no EMA, initialize it} 
            }
            $F_{\text{INV}} \gets \big(F_{\text{EMA}} + \tau\,\mathbf{I}\big)^{-1}$\tcp*{\scriptsize Inverse FIM with Tikhonov regularization} 
        }

        $\tilde{\nabla}_wL \gets F_{\text{INV}}\,{\nabla}_wL$\tcp*{\scriptsize Natural Gradient Update} 
        $s^2 \gets s^2 + \eta_s \frac{1}{|N_t|+|B_t|} \sum_{\{X_i, y_i\}\in N_t,B_t}\frac{\nu s^2 [(y_i-f_w(X_i))^2-s^2]}{\nu s^2+(y_i-f_w(X_i))^2}$\tcp*{\scriptsize Score driven update of the scale} 
        \If{optimizer is Adam}{$\tilde{\nabla}_wL \gets \textsc{AdamUpdate} \tilde{\nabla}_wL$ }\tcp*{\scriptsize Use ADAM optimizer} 
        $w \gets w - \eta \,\tilde{\nabla}_wL$\;

    $\mathcal{B} \gets \textsc{ReservoirUpdate}\!\left(\mathcal{B},\, N_t,\, \text{maxsize}\right)$\tcp*{\scriptsize Update the Buffer} 
}
\label{NatSR_algo}
\end{algorithm2e}

\section{Additional Experimental Details}\label{app: experimental details}


During the online phase, the batch size is set to $1$, so the model is trained and evaluated at each new observation. In addition to this, methods using memory buffers sample $8$ samples from the buffer. 

All methods undergo a hyperparameter optimization repeated 30 times on a complete online learning with the ETTh1 dataset. During this phase, we select the best online learning rate, and keep it fixed when the methods are tested on the other datasets. For NatSR, also the best $\alpha_{EMA}$ used for the estimation of the gradient and the FIM is selected. The values of method-specific hyperparameters are the same as the ones reported in the original papers and the available code of \citet{pham2023learning} and \citet{wen2023onenet}.

The number of features to predict depends on the dataset and can go from as few as $7$ for ETT datasets to as many as $862$ for Traffic. 
The length of the input time series is always set to $60$, while the prediction length can be $1$, $24$, or $48$. The only exception to this is Traffic, for which we excluded the value $48$ as in \citet{pham2023learning}, due to the huge number of features of the dataset. 

In terms of hardware, all experiments are executed on a Linux machine equipped with two Tesla V100 16GB GPUs and Intel Xeon Gold 6140M CPUs. 

\textbf{Backbone architecture: } All strategies use a Temporal Convolutional Network (TCN) \citep{bai2018empirical} as a backbone. The sizes of the networks are the same, except for FSNET, which modifies the architecture with internal layers for the learning of adaptation coefficients, and OneNet, which keeps two separate TCNs, one doing convolutions only on the temporal dimension and one only on the variables' dimension. We test both Mean Squared Error and Mean Absolute Error losses.

\textbf{Evaluation Metric:} Choosing the correct metric to compare the different methods is not an easy task. In this paper we follow the choice of Monash repository \citep{godahewaMonashTimeSeries2021}, probably the most extensive open-source comparison of forecasting models, of using the Mean Absolute Scaled Error (MASE) to compare methods \citep{hyndman2006another}. It is defined as the mean absolute error of the forecasting model, divided by the mean absolute error of the one-step naive forecaster.
MASE is symmetric for positive and negative errors, scale-invariant, robust to outliers, and interpretable. For these reasons, it is considered a solid choice to compare different approaches \citep{franses2016note}.  
Note that values greater than 1 do not imply the naive forecaster is better, since it makes only one-step-ahead predictions while the models forecast multiple steps ahead. 

\textbf{NatSR experimental setup:}
When testing NatSR, we evaluate the FIM by sampling $k=100$ samples from the predicted distribution. This evaluation is performed only when the observed loss is in the worst $1\%$ of recently observed losses, estimating the mean and the variance of recently observed losses by using two EMAs with $0.01$ weight for new observations. The dynamic scale is updated by a score-driven model using a learning rate $\alpha_s=0.1$. The $\eta$ is fixed to 1, hence bounding the maximum Tikhonov regularizer $\tau$ to $1$. During the online phase, we tried two different approaches for the optimizer: $\nu=50$ and SGD, and $\nu=500$ and AdamW. The first combination (NatSR$_{stable}$) gives a more robust method with stricter bounds on the norm of the updates, while the second (NatSR$_{fast}$) allows for bigger updates, but it also introduces Adam's empirical normalization to stabilize the process. \textcolor{blue}{The Buffer Size is of 500 samples, the same used in ER}. 

\section{Additional experimental results}
\label{app: exp results}

\begin{table}[H]
    \centering
    \begin{tabular}{cc|ccccc|c}
    Dataset & Pred. Len & OGD & ER & DER++ & FSNET$_{MAE}$ & OneNet$_{MAE}$ & NatSR (Ours) \\
    \hline
    \multirow{3}{*}{ECL} 
     & 1  & $3.02$ & $2.55$ & $3.35$ & $0.41$ & $0.12$ & $2.15$ \\
     & 24 & $2.52$ & $1.96$ & $0.86$ & $1.29$ & $0.04$ & $0.60$ \\
     & 48 & $2.89$ & $2.20$ & $0.73$ & $0.36$ & $0.14$ & $0.11$ \\
    \hline 
    \multirow{3}{*}{ETTh1} 
     & 1  & $0.50$ & $0.62$ & $0.65$ & $0.66$ & $0.15$ & $0.36$ \\
     & 24 & $1.31$ & $1.44$ & $1.30$ & $2.16$ & $0.35$ & $0.05$ \\
     & 48 & $0.75$ & $1.00$ & $0.95$ & $1.01$ & $0.88$ & $1.00$ \\
    \hline 
    \multirow{3}{*}{ETTh2} 
     & 1  & $0.43$ & $0.60$ & $0.60$ & $0.89$ & $0.18$ & $0.40$ \\
     & 24 & $1.44$ & $0.41$ & $0.53$ & $1.24$ & $0.98$ & $0.89$ \\
     & 48 & $1.05$ & $1.21$ & $0.95$ & $2.05$ & $0.32$ & $2.95$ \\
    \hline
    \multirow{3}{*}{ETTm1} 
     & 1  & $0.81$ & $0.44$ & $0.45$ & $1.94$ & $0.38$ & $0.24$ \\
     & 24 & $1.09$ & $1.34$ & $4.13$ & $3.32$ & $1.98$ & $1.62$ \\
     & 48 & $2.25$ & $3.80$ & $7.08$ & $2.41$ & $0.66$ & $2.87$ \\
    \hline
    \multirow{3}{*}{ETTm2} 
     & 1  & $1.04$ & $0.51$ & $0.43$ & $0.98$ & $0.24$ & $0.25$ \\
     & 24 & $1.23$ & $0.41$ & $0.86$ & $0.43$ & $2.04$ & $0.44$ \\
     & 48 & $2.33$ & $1.73$ & $2.37$ & $1.90$ & $0.65$ & $0.34$ \\
    \hline
    \multirow{2}{*}{Traffic} 
     & 1  & $....$ & $....$ & $....$ & $0.75$ & $0.11$ & $0.50$ \\
     & 24 & $0.34$ & $0.13$ & $0.19$ & $0.59$ & $0.16$ & $0.70$ \\
    \hline
    \multirow{3}{*}{WTH} 
     & 1  & $0.82$ & $0.41$ & $0.77$ & $0.42$ & $0.14$ & $0.15$ \\
     & 24 & $1.05$ & $0.21$ & $0.62$ & $1.10$ & $0.58$ & $0.68$ \\
     & 48 & $1.19$ & $0.86$ & $0.98$ & $0.58$ & $0.40$ & $1.18$ \\
    \end{tabular}
    \caption{Standard Deviations of MASE multiplied by 100 for MSE loss.}
\end{table}

\begin{table}[H]
    \centering
    \begin{tabular}{c|ccccc|c}
    Dataset & OGD & ER & DER++ & FSNET$_{MAE}$ & OneNet$_{MAE}$ & NatSR (Ours) \\
    \hline
    ECL     & $139$ & $241$ & $233$ & $1111$ & $816$ & $1185$ \\
    \hline 
    ETTh1   & $71$  & $124$ & $127$ & $627$ & $457$ & $588$ \\
    \hline 
    ETTh2   & $70$  & $122$ & $122$ & $601$ & $434$ & $610$ \\
    \hline
    ETTm1   & $73$  & $128$ & $122$ & $637$ & $428$ & $575$ \\
    \hline
    ETTm2   & $116$ & $231$ & $222$ & $663$ & $417$ & $474$ \\
    \hline
    Traffic & $141$ & $231$ & $210$ & $804$ & $600$ & $1909$ \\
    \hline
    WTH     & $177$ & $298$ & $286$ & $1541$ & $1043$ & $1268$ \\
    \end{tabular}
    \caption{Mean total online training time in seconds for prediction length 24.}
\end{table}

\begin{table}[H]
\centering
\resizebox{\textwidth}{!}{
\begin{tabular}{ll|rrrrrrr|rr}
Dataset & Pred. Len & OGD & ER & DER++ & FSNET & FSNET$_{MAE}$ & OneNet & OneNet$_{MAE}$ & NatSR$_{stable}$ & NatSR$_{fast}$ \\
\hline
\multirow{3}{*}{ECL} 
 & 1  & $1.67$ & $1.43$ & $1.11$ & $1.56$ & $0.96$ & \underline{$0.78$} & $\textbf{0.64}$ & $3.53$ & \underline{$0.78$}$ $ \\
 & 24 & $3.12$ & $3.21$ & $2.89$ & $2.99$ & $1.42$ & \underline{$1.14$} & $\textbf{0.92}$ & $4.01$ & $1.41$ \\
 & 48 & $3.27$ & $3.01$ & $2.96$ & $3.12$ & $1.44$ & \underline{$1.17$} & $\textbf{0.96}$ & $4.14$ & $1.55$ \\
\hline
\multirow{3}{*}{ETTh1} 
 & 1  & $0.87$ & $0.83$ & \underline{$0.82$} & $0.93$ & $0.92$ & $0.86$ & $0.83$ & $\textbf{0.79}$ & $0.83$ \\
 & 24 & $1.50$ & $1.49$ & $1.45$ & \underline{$1.05$} & $1.08$ & $1.42$ & $1.38$ & $\textbf{0.97}$ & $1.40$ \\
 & 48 & $1.46$ & $1.42$ & $1.42$ & \underline{$1.15$} & $1.16$ & $1.38$ & $1.39$ & $\textbf{1.12}$ & $1.37$ \\
\hline
\multirow{3}{*}{ETTh2} 
 & 1  & $1.14$ & $1.09$ & $1.08$ & $1.11$ & $1.10$ & $1.12$ & \underline{$1.06$} & $\textbf{1.01}$ & $1.08$ \\
 & 24 & $1.65$ & $1.60$ & $1.60$ & \underline{$1.36$} & $1.37$ & $1.52$ & $1.56$ & $\textbf{1.23}$ & $1.64$ \\
 & 48 & $1.63$ & $1.62$ & $1.61$ & $1.49$ & \underline{$1.48$} & $1.54$ & $1.61$ & $\textbf{1.39}$ & $1.64$ \\
\hline
\multirow{3}{*}{ETTm1} 
 & 1  & $1.18$ & $1.03$ & \underline{$1.01$} & $1.06$ & $1.10$ & $1.11$ & $1.05$ & $\textbf{0.97}$ & $1.03$ \\
 & 24 & $2.19$ & $1.82$ & $1.83$ & \underline{$1.39$} & $1.45$ & $1.86$ & $1.91$ & $\textbf{1.16}$ & $1.83$ \\
 & 48 & $2.19$ & $1.91$ & $1.81$ & \underline{$1.51$} & $1.52$ & $1.95$ & $2.04$ & $\textbf{1.33}$ & $1.83$ \\
\hline
\multirow{3}{*}{ETTm2} 
 & 1  & $1.36$ & $1.26$ & $1.24$ & $1.27$ & \underline{$1.14$} & $1.33$ & $1.15$ & $\textbf{1.12}$ & $1.24$ \\
 & 24 & $2.03$ & $1.80$ & $1.81$ & \underline{$1.46$} & $1.48$ & $1.80$ & $1.79$ & $\textbf{1.37}$ & $1.82$ \\
 & 48 & $2.01$ & $1.85$ & $1.82$ & $1.52$ & \underline{$1.51$} & $1.87$ & $1.84$ & $\textbf{1.50}$ & $1.80$ \\
\hline
\multirow{2}{*}{Traffic} 
 & 1  & $0.84$ & $0.79$ & $0.78$ & $0.78$ & $0.70$ & \underline{$0.68$} & $\textbf{0.62}$ & $0.89$ & $0.71$ \\
 & 24 & $1.05$ & $1.07$ & $0.95$ & $0.95$ & $0.96$ & $0.97$ & \underline{$0.91$} & $1.14$ & $\textbf{0.87}$ \\
\hline
\multirow{3}{*}{WTH} 
 & 1  & $1.47$ & $1.30$ & $1.44$ & $1.17$ & $1.19$ & $1.15$ & \underline{$1.06$} & $\textbf{1.04}$ & $1.25$ \\
 & 24 & $1.98$ & $1.90$ & $1.84$ & \underline{$1.39$} & $1.44$ & $1.80$ & $1.73$ & $\textbf{1.25}$ & $1.80$ \\
 & 48 & $1.97$ & $1.89$ & $1.86$ & \underline{$1.45$} & $1.46$ & $1.85$ & $1.82$ & $\textbf{1.39}$ & $1.86$ \\
\end{tabular}
}
\caption{Complete table of average MASE across 3 runs. Best in \textbf{bold}, second best \underline{underlined}.}
\label{tab:OCL_TS_mase_full}
\end{table}

\end{document}